\documentclass{article} %
\usepackage[T1]{fontenc}
\usepackage{iclr2027_conference,times}

\usepackage{amsmath,amsfonts,bm}

\def\eqref#1{equation~\ref{#1}}

\def\1{\bm{1}}

\DeclareMathAlphabet{\mathsfit}{\encodingdefault}{\sfdefault}{m}{sl}
\SetMathAlphabet{\mathsfit}{bold}{\encodingdefault}{\sfdefault}{bx}{n}

\usepackage{graphicx}
\usepackage{url}
\usepackage[normalem]{ulem}
\usepackage{natbib}

\usepackage{amsmath, amssymb, amsthm}
\usepackage{pifont}
\newtheorem{theorem}{Theorem}       

                         \newtheorem{proposition}{Proposition}

\usepackage{graphicx} 
\usepackage{algorithm}                            \usepackage{algpseudocode}

\usepackage{booktabs,longtable}
\usepackage{xcolor}

\usepackage{multirow, multicol}

\usepackage{xcolor,colortbl}

\usepackage{wrapfig}
\usepackage{iftex}
\ifPDFTeX\else
\usepackage{newunicodechar}
\newunicodechar{’}{\textquoteright}
\newunicodechar{“}{\textquotedblleft}
\newunicodechar{”}{\textquotedblright}
\newunicodechar{–}{\textendash}
\fi
\usepackage{hyperref}

\definecolor{dgreen}{RGB}{0,150,0}
\definecolor{gray}{RGB}{127,127,127}
\definecolor{orange}{RGB}{255,120,0}

\newcommand{\bB}{\mathbf{B}}
\newcommand{\bA}{\mathbf{A}}
\newcommand{\bI}{\mathbf{I}}

\newcommand{\bS}{\mathbf{S}}

\newcommand{\bP}{\mathbf{P}}

\newcommand{\be}{\mathbf{e}}

\newcommand{\bx}{\mathbf{x}}
\newcommand{\bp}{\mathbf{p}}

\newcommand{\bt}{\mathbf{t}}
\newcommand{\bc}{\mathbf{c}}

\newcommand{\dirpix}{\text{ADR-}\ensuremath{\tilde{\pi}}}
\newcommand{\dirdense}{\text{ADR-}\ensuremath{\pi_d}}

\newcommand{\parag}[1]{\vspace{-3mm}\paragraph{#1}}

\title{Readout Blindness: VLM Scores Miss the Spatial Direction Their Frozen Encoders Retain}

\author{%
\parbox[t]{\dimexpr\textwidth-2\tabcolsep\relax}{\centering
Guangyuan Li\textsuperscript{1, 2}, 
Tianming Du\textsuperscript{1, 2}\thanks{\raggedright Correspondence to: Tianming Du (du.tianming@aalto.fi) and Jiancheng Yang (jiancheng.yang@aalto.fi).},
Yan Jiang,\textsuperscript{1, 3},  
Bihan Wen, \textsuperscript{4},
Jiancheng Yang\textsuperscript{1, 2}\footnotemark[1]
\\
\textsuperscript{1}ELLIS Institute Finland, \textsuperscript{2}Aalto University
\\
\textsuperscript{3}University of Oulu, \textsuperscript{4}Nanyang Technological University
\\
\texttt{\{guangyuan.li, du.tianming, jiancheng.yang\}@aalto.fi}
\\
\texttt{yan.jiang@oulu.fi}
\\
\texttt{bihan.wen@ntu.edu.sg}
}%
}

\iclrfinalcopy %
\definecolor{linkpink}{RGB}{170,45,85}
\hypersetup{colorlinks=true,
  linkcolor=linkpink,
  citecolor=linkpink,
  urlcolor=linkpink,
  pdfborder={0 0 0},
  pdftitle={Readout Blindness: VLM Scores Miss the Spatial Direction Their Frozen Encoders Retain},
  pdfauthor={Guangyuan Li, Tianming Du, Yan Jiang, Bihan Wen, Jiancheng Yang},
  pdfsubject={Preprint},
  pdfkeywords={vision-language models, spatial relations, readout, alignment}}
\begin{document}
\raggedbottom

\maketitle
\lhead{}
\renewcommand{\headrulewidth}{0pt}

\begin{abstract}

CLIP-like vision-language models remain a cornerstone of multimodal systems, yet their scores stay near chance on directed spatial relations, such as whether one object is left of another. We call this failure \textit{readout blindness} and analyze, theoretically and empirically, why deployed scores miss the direction: when scoring rules treat the subject and object symmetrically, direction cancels regardless of encoder training. Guided by this analysis, we introduce \textit{Antisymmetric Displacement Readout} (ADR), which aligns caption words with image patches in the frozen features and scores each relation by the signed displacement between matched object centroids. Notably, ADR succeeds without additional training or learned parameters, thereby demonstrating that directional information remains in the frozen encoder. However, text and world priors can inflate accuracy, so we further introduce \textit{prior deflation}, which measures the benefit of the image-text pairing as the grounded gain over a null that pairs each item with an unrelated image. Extensive experiments across encoder families show that ADR substantially improves over deployed scores, which remain near chance on most direction-balanced sets even for fine-tuned encoders. Compared with more complex readouts, ADR outperforms the evaluated MLLM likelihood readouts and is competitive with their chat inference at a small fraction of the computation. These results support our claim that directional information can be recovered from frozen features by an appropriate readout. Our implementation and evaluation kit are available at \textit{\url{https://github.com/GarryLarry010131/ADR}}.

\end{abstract}

\section{Introduction}
\label{sec:intro}
Vision-language models such as CLIP~\citep{radford2021learning} and SigLIP~\citep{zhai2023sigmoid} match images and captions through a score computed from separately encoded representations. Their vision encoders are also used in multimodal large language models (MLLMs), which generate answers~\citep{liu2024improved,bai2025qwen3}. This paper concerns the score-based readout. Beyond its efficiency, scores over explicit candidates support ranking and thresholding, making them a natural interface for decision making. This long-standing design~\citep{lecun2006tutorial} is also reflected in recent systems such as Jev~\citep{typesafe2026jev}. For such decisions to be reliable, the score must capture spatial relations between objects, since the meaning of a caption can depend on whether one object is to the left of, above, or in front of another.
However, existing image-text matching scores remain surprisingly weak at these directional relations, often performing near chance even when the relevant objects are recognized~\citep{yuksekgonul2023when,kamath2023whatsup,liu2023visual}.

Existing works have mainly attributed directional failure to limitations in training or representation, including contrastive objectives~\citep{yuksekgonul2023when}, limited spatial supervision~\citep{kamath2023whatsup}, encoder design, and learning mechanisms~\citep{alam2026spatial,yamamoto2026left}. Accordingly, most approaches modify the encoder through fine-tuning or additional supervision~\citep{zhang2024contrasting,castro2024clove,hoehing2023s}. Improvement alone, however, does not explain the original failure: it may come from newly learned spatial cues or from text and world priors. Meanwhile, structural analyses of global embeddings~\citep{kang2025clip} and spatial pooling~\citep{bukkapatnam2026compositional} explain specific failure mechanisms, but not the broader directional failures observed across deployed scoring rules. Therefore, the question remains unresolved: \textit{does the original encoder already contain the directional information that its score fails to expose?}

To answer this question, we keep the encoder fixed and examine what can be recovered from the features. We find that these features retain directional information, even when the image-text score fails to distinguish direction. We call this gap between what the encoder retains and what the score reveals \textit{readout blindness}. We support this diagnosis theoretically and empirically, showing that directional failure can arise from the scoring rule even when the information is present in the encoder. This shifts the problem from learning a better representation to designing a better readout.

\begin{figure}[t]
\centering
\includegraphics[width=\linewidth]{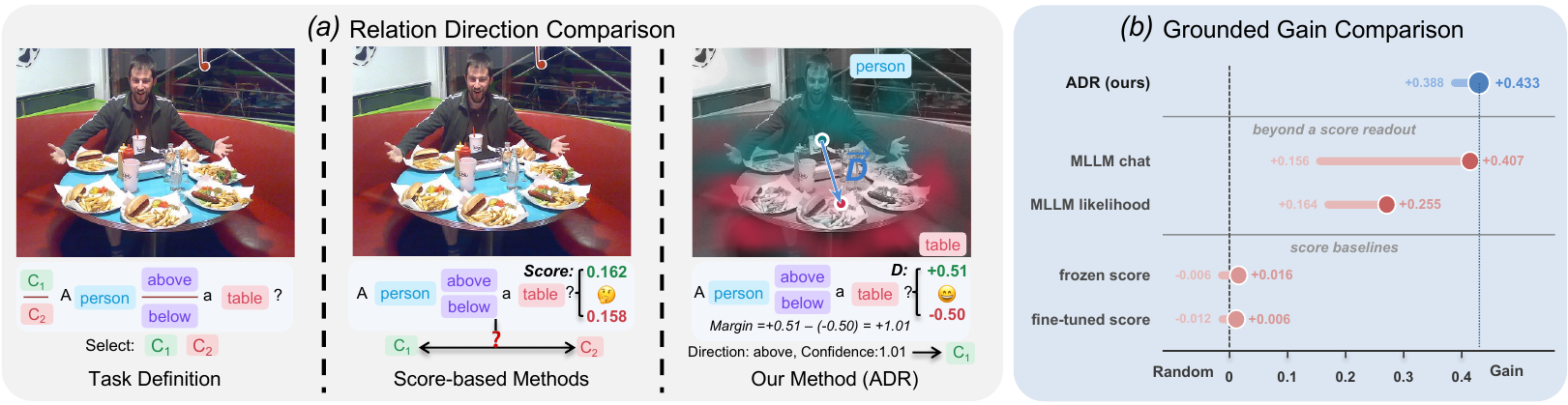}\par
\caption{Readout blindness and ADR. (a)~Choosing between captions with opposite relation directions. The deployed score gives near-equal scores to both, while ADR recovers the direction from the signed centroid displacement $D$ from the same frozen encoder. (b)~Grounded gain on VG-Relation left/right. Dots mark the best readout per group, and segments show the range.}

\label{fig:teaser}
\end{figure}

Therefore, we propose the \emph{Antisymmetric Displacement Readout} (ADR), which recovers relational direction directly from the same frozen encoder. ADR makes the correspondence between caption elements and visual regions explicit and determines direction from the signed displacement between object centroids (Fig.~\ref{fig:teaser}(a)). It requires neither tuning nor additional learned parameters, while its alignment maps and centroids help explain each decision.

In addition, we find that priors can drive high performance in readouts without the matched image, making accuracy alone unreliable for evaluating alignment. We therefore introduce prior deflation to assess the contribution of image-caption alignment. We pair each caption with an unrelated image from the same benchmark and score it again. We call this the derangement null and use it to assess prior effects. Prior deflation then subtracts null accuracy from raw accuracy to obtain grounded gain, which measures the benefit of the original image-caption pairing after accounting for prior effects. 
  
For fairer comparisons, we compare readouts on direction-balanced subsets with near-chance null accuracies. These subsets, prior deflation, reproducible derangement controls and shared scoring rules form our reusable evaluation kit. Under this evaluation, ADR achieves larger grounded gains than deployed readouts, including readouts from fine-tuned encoders. It is also competitive with the chat readouts of MLLMs (Fig.~\ref{fig:teaser}(b)). These results support our diagnosis of readout blindness: directional information can be recovered from frozen encoders by an appropriate readout.

In summary, our contributions are:
\textit{(i)} we identify and formalize \emph{readout blindness}, showing theoretically and empirically that directional failure can arise from the readout;
\textit{(ii)} we propose ADR, a training-free readout that recovers direction through explicit alignment and signed centroid displacement; and
\textit{(iii)} we introduce an evaluation kit combining prior deflation, direction-balanced subsets and shared scoring rules to compare readouts by grounded gain after accounting for prior effects.

\section{Related Work}
\label{sec:related}

\parag{Readouts of vision-language models.}
Vision language systems expose several readouts for image text matching. Dual encoders encode the image and text separately and compare the pooled representations with a global similarity~\citep{radford2021learning,zhai2023sigmoid}. Matching heads in BLIP~\citep{li2022blip}, BLIP-2~\citep{li2023blip} and FLAVA~\citep{singh2022flava} score image text pairs through cross-modal attention. MLLMs connect a vision encoder to a large language model and offer chat and likelihood readouts~\citep{liu2024improved,bai2025qwen3}, and VQAScore uses the probability of an affirmative answer to a matching question~\citep{lin2024evaluating}. These readouts differ in how the image and text interact, yet none explicitly compares object positions. We therefore treat the readout as the object of study and ask which scoring rules can express relation direction (Sec.~\ref{sec:impossibility}).

\parag{Spatial direction.}
Deployed scores remain weak on directional and compositional benchmarks \citep{yuksekgonul2023when,kamath2023whatsup,liu2023visual,thrush2022winoground,parcalabescu2022valse}. The remedies in Sec.~\ref{sec:intro} neither test what frozen features contain nor control for priors. Among structural analyses, \citet{kang2025clip} show that an idealized CLIP embedding cannot represent spatial relations and train DCSM on similarity maps. \citet{bukkapatnam2026compositional} attribute the failure to permutation-invariant pooling, but their accuracy bound assumes equal caption scores for one image, which their proof does not establish. Neither uses a prior control. We instead derive when a scoring rule cannot separate directions (Sec.~\ref{sec:impossibility}) and measure grounded gain (Sec.~\ref{sec:deflation}).

\parag{Explicit alignment.}
An explicit alignment is a correspondence between text tokens and image regions that a method computes and exposes. Optimal transport (OT) couplings play this role by matching patches or concepts across modalities~\citep{xie2025discovering,zhang2026bridging} and by supporting interpretation in language tasks~\citep{jia2026scout}. ReCLIP resolves spatial relations by comparing the centers of detector proposals~\citep{subramanian2022reclip}. These couplings carry individual concepts rather than directed relations and are read through trained modules. ReCLIP is training-free, but it takes direction from hand-coded box rules over external proposals rather than from the alignment itself. ADR builds on the same OT coupling (Sec.~\ref{sec:preliminaries}) and reads direction from the frozen encoder's own alignment through an antisymmetric kernel (Sec.~\ref{sec:method}).

\section{Preliminaries}
\label{sec:preliminaries}

A readout uses the image and text representations produced by a model to produce a task output. For image-text matching, it assigns a score that indicates how well an image and a caption match.

\parag{Features and coupling.}
For an image $I$, let $\bx_i$ ($i=1,\ldots,N$) be its unit-norm patch features. The text encoder maps each content word in caption $C$ independently to a feature $\bt_j$ ($j=1,\ldots,m$) in the same space. The patch-word similarity matrix $\bS$ has entries $S_{ij}=\langle\bx_i,\bt_j\rangle$. Entropic optimal transport~\citep{cuturi2013sinkhorn,gabriel2019computational} gives the coupling
\begin{equation}
\pi_0(\bS)=\arg\max_{\pi\in U(\mathbf{a},\mathbf{b})}
\big\{\langle\bS,\pi\rangle-\varepsilon\,\mathrm{KL}(\pi\,\|\,\mathbf{a}\otimes\mathbf{b})\big\},
\label{eq:ot}
\end{equation}
where $\varepsilon>0$, $\mathbf{a}=\mathbf{1}/N$, $\mathbf{b}=\mathbf{1}/m$, and $U(\mathbf{a},\mathbf{b})=\{\pi>0\mid\pi\mathbf{1}=\mathbf{a},\ \pi^\top\mathbf{1}=\mathbf{b}\}$. $\varepsilon$ controls the regularization strength, with smaller values giving more concentrated alignments.
Sinkhorn scaling~\citep{sinkhorn1967concerning} computes $\pi_0$ by alternately rescaling rows and columns of $\exp(\bS/\varepsilon)$.

\parag{Relations and symmetry.}
The image kernel $\bA\in\mathbb{R}^{N\times N}$ and text kernel $\bB\in\mathbb{R}^{m\times m}$ are matrices that encode relations between patches and between words, respectively. The coupling $\pi$ aligns image relations with word relations through $\pi^\top\bA\pi$, which we compare with $\bB$ using the score $\operatorname{tr}(\pi^\top\bA\pi\bB)$ (App.~\ref{app:relation-score}). To study whether this score distinguishes relation direction, we reverse word relations, which replaces $\bB$ with $\bB^\top$, and separate each kernel into symmetric and antisymmetric parts. We write $\bA=\bA^s+\bA^a$, where $\bA^s=(\bA+\bA^\top)/2$ and $\bA^a=(\bA-\bA^\top)/2$, and likewise for $\bB$. Transposition preserves the symmetric part and negates the antisymmetric part.

\section{Readout Blindness}
\label{sec:impossibility}

For an image $I$, a directional score $F(I,C)$ must rank the correct caption $C^+$ above a distractor $C^-$ with the subject and object swapped (an argument swap). These captions contain the same words but express different relations. We analyze the scoring rules behind these failures and prove that inappropriate readouts can cause direction blindness, which encoder training alone cannot resolve.

We first consider scores that ignore word order. We call a score $F(I,C)$ \emph{unary} if it treats the caption as a bag of words,depending only on which word features $\bt_j$ occur and how often, but not the order. These scores include dense pooling of $\bS$ and the transport score $\langle\bS,\pi_0\rangle$, with $\pi_0$ computed by Eq.~\ref{eq:ot}.

\begin{theorem}[Unary blindness]
\label{thm:unary}
If $C^+$ and $C^-$ have the same word multiset, every unary score satisfies $F(I,C^+)=F(I,C^-)$ for every image and every choice of context-free word features.
\end{theorem}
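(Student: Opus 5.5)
The plan is to unwind the definition of a unary score and note that an argument swap does not change the caption's bag of words. First I would formalize ``unary'': $F(I,C)=G\big(I,\mu_C\big)$ for some function $G$, where $\mu_C$ is the multiset $\{\!\{\bt_j\}\!\}_{j=1}^m$ of word features of $C$. Equivalently, $F(I,C)$ depends on the features $\bt_1,\ldots,\bt_m$ only through a permutation-invariant function. Next I would use the context-free assumption: each content word is mapped to its feature independently of its position and neighbours, so $\bt_j=\phi(w_j)$ for a fixed map $\phi$. If $C^+$ and $C^-$ have the same word multiset, there is a permutation $\sigma$ of $\{1,\ldots,m\}$ with $w^-_j=w^+_{\sigma(j)}$. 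Hence $\bt^-_j=\bt^+_{\sigma(j)}$ and $\mu_{C^+}=\mu_{C^-}$. Unariness then gives $F(I,C^+)=G(I,\mu_{C^+})=G(I,\mu_{C^-})=F(I,C^-)$. This holds for every $I$ and every $\phi$, because neither was constrained.

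To show that the named examples fall under the definition, I would check them directly. A permutation of the words permutes the columns of $\bS$: $\bS^-=\bS^+\bP$ with $\bP$ the permutation matrix of $\sigma$. Any dense pooling that is symmetric in columns (mean, max, or a sum of row-wise maxima) is therefore unchanged. For the transport score, the marginal $\mathbf{b}=\mathbf{1}/m$ is uniform, so $\bP^\top\mathbf{b}=\mathbf{b}$. Consequently, $\pi\mapsto\pi\bP$ is a bijection of $U(\mathbf{a},\mathbf{b})$. Both $\langle\bS\bP,\pi\bP\rangle=\langle\bS,\pi\rangle$ and $\mathrm{KL}(\pi\bP\,\|\,\mathbf{a}\otimes\mathbf{b})=\mathrm{KL}(\pi\,\|\,\mathbf{a}\otimes\mathbf{b})$ are invariant under this map. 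By strict concavity the maximizer is unique, so $\pi_0(\bS\bP)=\pi_0(\bS)\bP$ and $\langle\bS\bP,\pi_0(\bS\bP)\rangle=\langle\bS,\pi_0(\bS)\rangle$.

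The only step needing care is this equivariance of the entropic OT solution: one must use uniqueness of the maximizer and invariance of the reference measure $\mathbf{a}\otimes\mathbf{b}$ under column permutations. Both hold because the marginals are uniform. Everything else is definitional. I would also remark that the conclusion relies on the features being context-free: a contextual text encoder could give different $\bt_j$ for the two orderings, and then the multisets need not coincide.
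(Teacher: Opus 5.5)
Your proof is correct and follows the paper's argument: context-free encoding turns equal word multisets into equal feature multisets, and a unary score sees only that multiset and the image, so the two captions receive the same score. Your extra check that dense pooling and the transport score are actually unary, via the equivariance $\pi_0(\bS\bP)=\pi_0(\bS)\bP$, is not part of the paper's proof of this theorem; the paper uses the same permutation argument in its proof of Theorem~\ref{thm:swap}.
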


An argument swap does not change the multiset, so further encoder training cannot break the tie under a unary score (proof in App.~\ref{app:unary-proof}). This motivates adding word relations that encode the subject and object roles while preserving the argument order.

We next consider relational scores, which compare word relations $\bB$ with image relations through the coupling. However, with a symmetric image kernel and a fixed coupling, the score cannot distinguish opposite directions. Consider the combined score
\begin{equation}
F(I,C)=\alpha\,\langle\bS,\pi_0\rangle
 +(1-\alpha)\,\operatorname{tr}(\pi_0^\top\bA\pi_0\bB),
\label{eq:oneshot}
\end{equation}
where $\alpha\in[0,1]$ and $\pi_0$ is computed from $\bS$ alone by Eq.~\ref{eq:ot}. The first term is unary and covered by Thm.~\ref{thm:unary}. We therefore examine whether the second, relational term can break the tie. This term separates into symmetric and antisymmetric interactions (proof in App.~\ref{app:decomposition}):
\begin{equation}
\operatorname{tr}(\pi^\top\bA\pi\bB)
 =\operatorname{tr}(\pi^\top\bA^s\pi\bB^s)
 +\operatorname{tr}(\pi^\top\bA^a\pi\bB^a).
\label{eq:decomposition}
\end{equation}
Reversing a relation leaves $\bB^s$ unchanged and negates $\bB^a$. When the image kernel is symmetric ($\bA^a=0$), the antisymmetric term vanishes and the score cannot distinguish the two directions.

\begin{theorem}[Symmetric blindness]
\label{thm:relational}
For symmetric $\bA$ and fixed $\pi$, reversing the word relations leaves the score unchanged:
$\operatorname{tr}(\pi^\top\bA\pi\bB)=\operatorname{tr}(\pi^\top\bA\pi\bB^\top)$ for every $\bB$.
\end{theorem}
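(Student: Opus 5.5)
The plan is a direct trace computation, using transposition invariance of the trace together with cyclicity. Set $\bP=\pi^\top\bA\pi$, an $m\times m$ matrix. Since $\pi$ is fixed, $\bP$ does not depend on $\bB$, so both sides of the claimed identity are linear functionals of $\bB$ evaluated at the same $\bP$. The first step is to observe that symmetry of $\bA$ passes to $\bP$:
\[
\bP^\top=(\pi^\top\bA\pi)^\top=\pi^\top\bA^\top\pi=\pi^\top\bA\pi=\bP.
\]

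The second step is to use $\operatorname{tr}(\bM)=\operatorname{tr}(\bM^\top)$ with $\bM=\bP\bB^\top$. This gives
\[
\operatorname{tr}(\bP\bB^\top)=\operatorname{tr}\big((\bP\bB^\top)^\top\big)=\operatorname{tr}(\bB\bP^\top)=\operatorname{tr}(\bB\bP).
\]
Cyclicity then yields $\operatorname{tr}(\bB\bP)=\operatorname{tr}(\bP\bB)$. The chain shows $\operatorname{tr}(\pi^\top\bA\pi\bB^\top)=\operatorname{tr}(\pi^\top\bA\pi\bB)$ for every $\bB$, which is the claim.

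An alternative route goes through the decomposition in Eq.~\ref{eq:decomposition}. If $\bA^a=0$, the antisymmetric term vanishes, and the score equals $\operatorname{tr}(\pi^\top\bA^s\pi\bB^s)$. Transposing $\bB$ leaves $\bB^s$ unchanged, so the score is the same for $\bB$ and $\bB^\top$. This route relies on the decomposition proved in App.~\ref{app:decomposition}. That decomposition itself needs the fact that the trace of a product of a symmetric and an antisymmetric matrix is zero, so the direct computation above is more self-contained and is the one I would present.

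There is no real obstacle here; the argument is routine. The point needing care is the meaning of "fixed $\pi$": the coupling must not be recomputed from the reversed caption. Under Eq.~\ref{eq:oneshot}, $\pi_0$ depends only on $\bS$. Since reversing the relation changes only $\bB$ and not the word features, $\bS$ and hence $\pi_0$ are the same for $\bB$ and $\bB^\top$, so the hypothesis is satisfied there. I would state this explicitly after the computation, and note that the result holds for any coupling, not only for entropic OT solutions.
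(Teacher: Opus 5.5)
Your proof is correct. Your primary argument is the direct one: $\pi^\top\bA\pi$ inherits symmetry from $\bA$, and then $\operatorname{tr}(M\bB^\top)=\operatorname{tr}(\bB M^\top)=\operatorname{tr}(M\bB)$ for any symmetric $M$.

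The paper instead uses what you call the alternative route. It sets $\bA^a=0$ in Eq.~\ref{eq:decomposition}, which reduces the score to $\operatorname{tr}(\pi^\top\bA\pi\bB^s)$, and this is invariant because transposition fixes $\bB^s$. It then adds, as your last paragraph does, that fixing $\bS$ fixes $\pi_0$ and that the unary term does not involve $\bB$, so the full score in Eq.~\ref{eq:oneshot} is invariant for every $\alpha$.

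Both arguments rest on the same trace-transposition identity; the paper's decomposition proof is that identity applied to a symmetric-times-antisymmetric product. Your version is self-contained and needs no prior lemma. The paper's version reuses the decomposition to show that what is lost is exactly the antisymmetric channel $\operatorname{tr}(\pi^\top\bA^a\pi\bB^a)$, which is the term ADR is built to exploit.
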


For symmetric $\bA$, Eq.~\ref{eq:oneshot} is therefore invariant to relation reversal at fixed $\bS$, for any $\alpha\in[0,1]$ and $\varepsilon>0$. Patch distances and inner-product similarities give symmetric image kernels, so further encoder training cannot remove this invariance (proof in App.~\ref{app:symmetry-proof}).

Thm.~\ref{thm:relational} compares a relation and its reversal using the same coupling. However, swapping the nouns in a caption also changes $\bS$ and therefore $\pi_0$. To study this change, we consider \emph{parse-isomorphic} argument swaps, which preserve the other words and the dependency structure. The swap permutes the columns of $\bS$ and the coupling $\pi_0$ from Eq.~\ref{eq:ot}. Assume the rows and columns of $\bB^s$ follow the same noun permutation. These permutations leave both terms of Eq.~\ref{eq:oneshot} unchanged when $\bA$ is symmetric, giving the following result.

\begin{theorem}[Chance by construction]
\label{thm:swap}
Under a parse-isomorphic argument swap, any unary score and any relational score with a symmetric image kernel (Eq.~\ref{eq:oneshot}) assign equal values to $C^+$ and $C^-$.
\end{theorem}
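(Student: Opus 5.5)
The plan is to treat the two score types separately and reduce both to permutation equivariance of entropic OT. The unary case follows directly from Thm.~\ref{thm:unary}. A parse-isomorphic argument swap only reorders the content words of the caption, so $C^+$ and $C^-$ have the same word multiset. With context-free word features, each word keeps its feature $\bt_j$, so any unary score gives $F(I,C^+)=F(I,C^-)$.

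For the relational term of Eq.~\ref{eq:oneshot}, let $\bP\in\mathbb{R}^{m\times m}$ be the permutation matrix of the noun swap. Write the similarity matrix of $C^-$ as $\bS^-=\bS^+\bP$, since the text features are context-free and only the columns move.

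The first step is equivariance: $\pi_0(\bS\bP)=\pi_0(\bS)\bP$. The marginal $\mathbf{b}=\mathbf{1}/m$ is uniform, so $\bP^\top\mathbf{b}=\mathbf{b}$ and $\pi\mapsto\pi\bP$ is a bijection of $U(\mathbf{a},\mathbf{b})$. The objective in Eq.~\ref{eq:ot} is preserved under this map:
\begin{itemize}
\item $\langle\bS\bP,\pi\bP\rangle=\langle\bS,\pi\rangle$;
\item the KL term is invariant, because $\mathbf{a}\otimes\mathbf{b}$ is a constant matrix whose columns are unchanged by permutation.
\end{itemize}
Strict concavity gives uniqueness of the maximizer, which yields the claim. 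Equivalently, Sinkhorn scaling of $\exp(\bS\bP/\varepsilon)$ commutes with column permutation.

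With $\pi^-=\pi^+\bP$, the transport term is
\[
\langle\bS^-,\pi^-\rangle=\operatorname{tr}\bigl((\pi^+\bP)^\top\bS^+\bP\bigr)=\langle\bS^+,\pi^+\rangle .
\]
This also follows from Thm.~\ref{thm:unary}.

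For the relational term, split $\bB^-$ using Eq.~\ref{eq:decomposition}. Because $\bA$ is symmetric, $\bA^a=0$, so only $\operatorname{tr}(\pi^\top\bA\pi\bB^s)$ survives; the antisymmetric part of $\bB^-$, whatever it is, contributes nothing. By assumption the rows and columns of $\bB^s$ follow the noun permutation, so $\bB^{s,-}=\bP^\top\bB^{s,+}\bP$. Cyclicity of the trace and $\bP\bP^\top=\bI$ then give
\[
\operatorname{tr}\bigl(\bP^\top\pi^{+\top}\bA\pi^+\bP\,\bP^\top\bB^{s,+}\bP\bigr)=\operatorname{tr}\bigl(\pi^{+\top}\bA\pi^+\bB^{s,+}\bigr).
\]
Hence both terms of Eq.~\ref{eq:oneshot} coincide for every $\alpha\in[0,1]$ and $\varepsilon>0$.

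The main obstacle is stating the OT equivariance carefully. It depends on three things:
\begin{itemize}
\item the uniform column marginal;
\item uniqueness of the entropic optimum;
\item the fact that the swap acts on $\bS$ purely as a column permutation.
\end{itemize}
That last point is where context-free word features and parse-isomorphism are used; with contextual features, $\bS^-$ would not be $\bS^+\bP$. I would state the permutation convention for $\bB^s$ explicitly as $\bP^\top\bB^s\bP$ to match it.
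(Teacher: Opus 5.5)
Your proof is correct and follows the paper's argument step for step: the unary case via Thm.~\ref{thm:unary}; column-permutation equivariance $\pi_0^-=\pi_0^+\bP$, which uses the uniform word marginal and uniqueness from strict concavity; discarding $\bB^a$ via Eq.~\ref{eq:decomposition} since $\bA^a=0$; and the cyclic-trace identity with $\bP\bP^\top=\bI$. Beyond this, the paper only remarks that scoring ties as $1/2$ gives exactly chance accuracy on the binary task, which the stated equality does not require.
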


For these caption pairs, the tie follows from the scoring structure and cannot be broken by improving the features (proof in App.~\ref{app:swap-proof}).

Beyond the theorems, global cosine scores compare the normalized image feature $\bx(I)$ with the full-caption feature $\bt(C)$. Their margin is $F(I,C^+)-F(I,C^-)=\bx(I)^\top[\bt(C^+)-\bt(C^-)]$. This margin depends on how the image feature aligns with the caption difference, so different caption features do not guarantee a correct ranking. In our analysis, argument swaps change the caption features of both encoders, yet the scores produce no ties and accuracy remains near chance (App.~\ref{app:cosine-diagnostic}). These failures indicate that a score can respond to word order without correctly reading relation direction.

To better distinguish relation direction, alternative readouts allow the image and text to interact before scoring. Cross-encoder matching heads jointly process the image and caption, while MLLM likelihoods score answers using both inputs. These approaches partly recover directional signal, with gains varying across tasks (Tab.~\ref{tab:alternatives}). However, their scores still measure overall image-text compatibility or answer likelihood without explicitly comparing object positions, so they do not guarantee that the spatial relation is correct.

The theorems identify a structural form of readout blindness, where the scoring rule prevents directional discrimination regardless of feature values. Thus, encoder training alone cannot break these ties. The experiments also reveal directional failures in readouts beyond these theorems. Together, these findings motivate a readout that preserves relation direction through an antisymmetric image kernel and uses explicit alignment to compare object positions.

\section{Antisymmetric Displacement Readout}
\label{sec:method}
\label{sec:channel}

\begin{figure}[t]
\centering
\includegraphics[width=\textwidth]{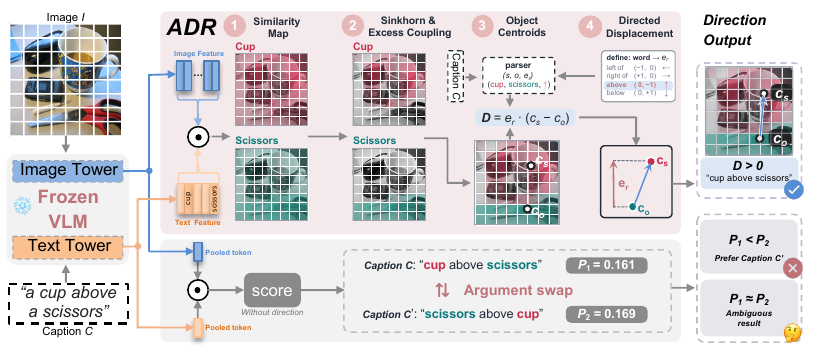}\par
\caption{ADR (top) versus deployed score (bottom) on the same frozen VLM. ADR (1)~computes patch--word similarity $\bS$, (2)~keeps the excess Sinkhorn coupling $\tilde{\pi}$, (3)~localizes the subject and object centroids, and (4)~scores their signed displacement $D=\be_r^\top(\bc_s-\bc_o)$ along the relation axis. The deployed score compares pooled features (CLS token for CLIP, attention-pooled token for SigLIP) and gives near-equal scores to a caption and its argument swap, leaving direction ambiguous.}

\label{fig:pipeline}
\end{figure}

Sec.~\ref{sec:impossibility} shows that, at a fixed coupling, the relation interaction requires a nonzero antisymmetric part in the image kernel to distinguish direction. We thus propose the antisymmetric displacement readout (ADR), which reads direction from the signed displacement between object centroids (Fig.~\ref{fig:pipeline}).

\parag{Antisymmetric kernels.}
To use the antisymmetric interaction in Eq.~\ref{eq:decomposition}, we construct antisymmetric image and text kernels. Let $\bp_i\in[0,1]^2$ be the normalized center of patch $i$ and $\be\in\mathbb{R}^2$ a unit vector along the relation axis. The image kernel has entries $A^a_{ii'}=\be^\top(\bp_i-\bp_{i'})$. On the text side, a directed relation from subject $s$ to object $o$ is represented by $B^a_{os}=1$ and $B^a_{so}=-1$, with all other entries zero. Both kernels change sign under argument exchange. This construction reduces the relational term to a weighted displacement between object centroids. For a nonnegative alignment $\pi$, let $\mu_j=\sum_i\pi_{ij}$ be the mass assigned to word $j$. Its centroid is 
$\bc_j=(\sum_i\pi_{ij}\bp_i)/\mu_j$, when $\mu_j>0$. With these kernels, the relational term reduces to a centroid displacement (proof in App.~\ref{app:centroid-proof}).
\begin{proposition}[Centroid displacement]
\label{prop:centroid}
For nonnegative $\pi$ with positive subject and object masses,
\[
\operatorname{tr}(\pi^\top\bA^a\pi\bB^a)
 =2\mu_s\mu_o\,\be^\top(\bc_s-\bc_o).
\]
\end{proposition}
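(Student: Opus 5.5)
The plan is to expand the trace entrywise, use the sparsity of $\bB^a$ to reduce it to two entries of $\mathbf{M}:=\pi^\top\bA^a\pi$, and then evaluate those entries using the fact that $A^a_{ii'}=\be^\top(\bp_i-\bp_{i'})$ is a difference of a function of $i$ and a function of $i'$.

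First, I write $\operatorname{tr}(\mathbf{M}\bB^a)=\sum_{j,k}M_{jk}B^a_{kj}$. The only nonzero entries of $\bB^a$ are $B^a_{os}=1$ and $B^a_{so}=-1$, and here I use $s\neq o$. Only the pairs $(j,k)=(s,o)$ and $(o,s)$ survive, so
\[
\operatorname{tr}(\pi^\top\bA^a\pi\bB^a)=M_{so}-M_{os}.
\]

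Second, I compute a general entry. By definition
\[
M_{jk}=\sum_{i,i'}\pi_{ij}\,\pi_{i'k}\,\be^\top(\bp_i-\bp_{i'}).
\]
This splits into two double sums. In the first, the sum over $i'$ factors out as $\mu_k$, leaving $\mu_k\,\be^\top\sum_i\pi_{ij}\bp_i$. In the second, the sum over $i$ factors out as $\mu_j$, leaving $\mu_j\,\be^\top\sum_{i'}\pi_{i'k}\bp_{i'}$. Because $\mu_s,\mu_o>0$, the centroids are defined and $\sum_i\pi_{ij}\bp_i=\mu_j\bc_j$ for $j\in\{s,o\}$. Hence
\[
M_{jk}=\mu_j\mu_k\,\be^\top(\bc_j-\bc_k).
\]
This expression is antisymmetric in $(j,k)$, as it must be since $\bA^a$ is antisymmetric, so $\mathbf{M}^\top=-\mathbf{M}$.

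Third, I substitute: $M_{so}-M_{os}=2M_{so}=2\mu_s\mu_o\,\be^\top(\bc_s-\bc_o)$, which is the claim.

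There is no real obstacle; the only points needing care are the index convention in the trace and the sign bookkeeping. The convention $B^a_{os}=1$ must pair with $M_{so}$, so that the subject-minus-object orientation comes out positive. The positivity of the masses is used only to rewrite the weighted sums $\sum_i\pi_{ij}\bp_i$ as $\mu_j\bc_j$. Nonnegativity of $\pi$ is not needed for the algebra itself; it only makes the $\bc_j$ genuine convex combinations, that is, points in $[0,1]^2$.
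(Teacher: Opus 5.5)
Your proof is correct and follows the paper's argument. Both use the two nonzero entries of $\bB^a$ to reduce the trace to $M_{so}-M_{os}$, then factor each double sum into a column mass times the weighted position sum $\mu_j\bc_j$; the only cosmetic difference is that you compute a general entry $M_{jk}$ and use the antisymmetry of $\pi^\top\bA^a\pi$ to double $M_{so}$, whereas the paper evaluates the two double sums separately.
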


The factor $2\mu_s\mu_o$ is positive, so each relation can be scored by the signed displacement $\be^\top(\bc_s-\bc_o)$. ADR computes this displacement directly without constructing the image and text kernels. We next specify how to compute the object centroids and choose the axis for each caption relation.

\parag{Object localization.}
ADR computes $\pi_0$ from $\bS$ alone using Eq.~\ref{eq:ot}, and each patch distributes a fixed total mass across the words. ADR then retains the mass above the uniform baseline as
$\tilde{\pi}=[\pi_0-1/(Nm)]_+$, where $[\cdot]_+$ clips negative entries to zero. Centroids computed from $\tilde{\pi}$ define \dirpix{}. App.~\ref{app:excess} describes how zero-mass columns are handled. Prop.~\ref{prop:centroid} also applies to other nonnegative weights with positive subject and object masses. A dense variant, \dirdense{}, computes centroids directly from weights $\pi_d$, obtained by a separate softmax of $\bS/\varepsilon$ over patches for each word. Each word is thus localized independently (App.~\ref{app:localizers}). Both variants use the same displacement score. We use \dirpix{} as the default because it is less sensitive to temperature (App.~\ref{app:epssweep}) and gives better confidence ranking (Tab.~\ref{tab:riskcov}).

\parag{Directional scoring.}

On the text side, a parsing rule maps each caption relation to a subject $s_r$, object $o_r$, and unit axis $\be_r$ as a triple $(s_r,o_r, \be_r)$, where $\be_r$ specifies the positive direction for each relation. For an image, ADR sums these displacements as $D(C)=\sum_r\be_r^\top(\bc_{s_r}-\bc_{o_r})$ and ranks candidate captions by $D$. By Prop.~\ref{prop:centroid}, the sign of each displacement indicates directional agreement, with zero as the threshold between opposite directions (App.~\ref{app:zero}). ADR can record the alignment weights, centroids and score to show which objects were localized and how their positions determine the prediction. Algorithm~\ref{alg:channel} summarizes the implementation of \dirpix{}, which uses the same displacement score across the evaluated encoders and datasets without fine-tuning or additional learned parameters. Sinkhorn settings are given in App.~\ref{app:sinkhorn}, and Tab.~\ref{tab:cost} reports the full readout cost.

\begin{table}[t]
\caption{Prior effects on a prior-biased subset (VALSE actant swap) and a direction-balanced subset (VG-Relation left/right). Acc. is accuracy with the original images. n.acc. is accuracy under the derangement null (Sec.~\ref{sec:deflation}). Random is accuracy from uniformly random predictions. Dev. ($\text{n.acc.}-\text{random}$) measures deviation from random guessing. Gain is grounded gain ($\text{acc.}-\text{n.acc.}$) and measures the benefit of the original image-caption alignment. Accuracies are reported in percent (\%), and dev. and gain in percentage points. Full results on all eight subsets are given in Tab.~\ref{tab:prior-all}.}
\label{tab:prior}
\centering
\fontsize{8}{9.5}\selectfont
\setlength{\tabcolsep}{3pt}
\begin{tabular*}{\columnwidth}{@{\extracolsep{\fill}}lclccccc@{}}
\toprule
\multirow{2}{*}{Subset} & \multirow{2}{*}{Random} & \multirow{2}{*}{Metric} & \multirow{2}{*}{CLIP} & \multirow{2}{*}{NegCLIP} & \multirow{2}{*}{SigLIP2} & LLaVA-1.5 & Qwen3.6 \\
 & & & & & & likelihood & likelihood \\
\midrule
\emph{Prior-based subset} &
 \multirow{4}{*}{50.03} & acc. & 72.71 & 85.14 & 62.07 & 93.26 & 93.99 \\
 \multirow{3}{*}{\shortstack[l]{VALSE\\actant swap}} & & n.acc. & 41.31 & 62.80 & 28.35 & 88.51 & 88.41 \\
& & dev. & $-8.7$ & $+12.8$ & $-21.7$ & $+38.5$ & $+38.4$ \\
& & gain & $+31.4$ & $+22.3$ & $+33.7$ & $+4.8$ & $+5.6$ \\
\midrule
\emph{Direction-balanced subset} & \multirow{4}{*}{50.01} & acc. & 49.64 & 50.26 & 49.95 & 65.80 & 74.76 \\
\multirow{3}{*}{\shortstack[l]{VG-Rel\\left/right}} & & n.acc. & 49.69 & 49.67 & 50.54 & 49.44 & 49.29 \\
& & dev. & $-0.3$ & $-0.3$ & $+0.5$ & $-0.6$ & $-0.7$ \\
& & gain & $-0.1$ & $+0.6$ & $-0.6$ & $+16.4$ & $+25.5$ \\
\bottomrule
\end{tabular*}
\end{table}

\section{Evaluation Kit}
\label{sec:deflation}
\label{sec:deflation-results}
\begin{wrapfigure}[23]{R}{0.3\textwidth}
\centering
\includegraphics[width=\linewidth]{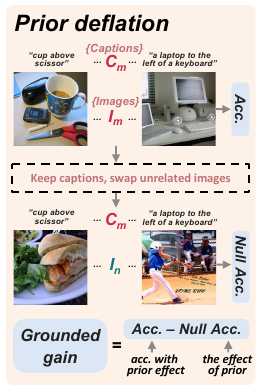}
\par
\caption{\raggedright Prior deflation compares accuracy under the original pairing and a derangement null. Their difference is grounded gain.}
\label{fig:prior-deflation}
\end{wrapfigure}

ADR in Sec.~\ref{sec:method} recovers direction without fine-tuning or additional learned parameters. We next assess how much of the performance comes from image-text alignment, because a readout can use text and world priors to favor the correct answer even without the paired image. High accuracy alone therefore does not show that a readout correctly aligns the image and caption. We introduce an evaluation kit that combines prior deflation, direction-balanced evaluation sets and shared scoring rules. Prior deflation uses a derangement null to assess prior effects and reports grounded gain as a measure of image-text alignment after accounting for these effects (Fig.~\ref{fig:prior-deflation}).

\parag{Derangement null.}
The derangement null replaces each image with another from the same benchmark through a deterministic permutation. No item receives its original source image. All readouts use the same replacements, while captions, labels, model weights and decision procedures remain unchanged. Null accuracy measures performance with the deranged pairing. If null accuracy deviates from chance, the readout prefers certain answers even without the original image, which may reflect text or world priors.

\parag{Prior effects.}
Tab.~\ref{tab:prior} compares five readouts on VALSE actant swap and VG-Relation left/right. The deviation is derangement-null accuracy minus random-guess accuracy. This measures the degree of prior bias under this control, while its sign shows whether that bias supports or opposes the correct choice. On VALSE actant swap, all five readouts show large deviations, indicating substantial prior effects. On VG-Relation left/right, all five null accuracies remain close to random guessing, allowing comparisons with less influence from priors. The other subsets show a similar contrast between prior-biased and direction-balanced evaluation sets (Tab.~\ref{tab:prior-all}).

\parag{Grounded gain.}
These results show why raw accuracy alone cannot measure the benefit of the original pairing. Prior deflation therefore reports the grounded gain as $\mathrm{gain}=\mathrm{Acc}_{\mathrm{real}}-\mathrm{Acc}_{\mathrm{null}}$, where the two accuracies use the original pairing and the derangement null, respectively. A positive gain indicates a benefit from the paired image. Each readout is compared with its own null because readouts exploit prior preferences to different degrees. We report both accuracies alongside gain. Grounded gain on the prior-biased subset in Tab.~\ref{tab:prior} can be small even when raw accuracy is high. On VALSE actant swap, LLaVA-1.5 likelihood achieves $93.26\%$ real accuracy but only $+4.8$ percentage points of gain because its null accuracy reaches $88.51\%$. Accuracy remains high without the original image, limiting what this result reveals about image-text alignment. On VG-Relation left/right, the same readout achieves $65.80\%$ real accuracy and $49.44\%$ null accuracy, giving a grounded gain of $+16.4$ points despite lower raw accuracy. Reducing prior bias makes it easier to assess the contribution of image-text alignment and identify differences between readouts. We therefore construct direction-balanced evaluation sets from existing benchmarks.

\parag{Evaluation protocol.} 

We define candidate subsets by relation labels and task structure, and evaluate each under the derangement null with the MLLM likelihood baselines (LLaVA and Qwen), which show strong prior effects in Tab.~\ref{tab:prior}. Subsets with near-chance null accuracies for both baselines form our direction-balanced evaluation sets (Tab.~\ref{tab:data-overview}; nulls in Tab.~\ref{tab:prior-all}). All readouts use the same items, derangement assignments and scoring rules (App.~\ref{app:scoring}). We will release the item lists and evaluation tools as a reusable kit (App.~\ref{app:data}).

\providecommand{\pizero}{\pi_0}
\providecommand{\dirpix}{\text{ADR-}\ensuremath{\tilde{\pi}}}
\providecommand{\dirpi}{\text{ADR-}\ensuremath{\pi_0}}
\providecommand{\dirdense}{\text{ADR-}\ensuremath{\pi_d}}
\providecommand{\dirargmax}{\text{ADR-argmax}}
\providecommand{\tr}{\operatorname{tr}}
\providecommand{\ci}[2]{{\footnotesize[#1,\,#2]}}

\section{Experiments}
\label{sec:experiments}

\label{sec:setup}

\begin{table}[t]
\caption{Random guess, deployed readouts and ADR across nine model families on five direction-balanced sets. Each subset reports accuracy with the original images (acc.), derangement-null accuracy (n.acc.), and grounded gain (gain). Accuracies are in percent (\%) and gains in percentage points. \textcolor[gray]{0.45}{Grey} rows show deployed readouts, followed by \dirpix{} and \dirdense{} on the same encoder and items. \colorbox{red!25}{Red} highlights grounded gain, the main metric for comparing readouts.}
\label{tab:main}
\centering
\fontsize{8}{9.5}\selectfont
\setlength{\tabcolsep}{1pt}
\begin{tabular*}{\columnwidth}{@{\extracolsep{\fill}}l*{15}{c}@{}}
\toprule
\multirow{2}{*}{Readout} & \multicolumn{3}{c}{VG-Rel L/R} & \multicolumn{3}{c}{COCO} & \multicolumn{3}{c}{WU-A} & \multicolumn{3}{c}{WU-B} & \multicolumn{3}{c}{VSR} \\
\cmidrule(lr){2-4}\cmidrule(lr){5-7}\cmidrule(lr){8-10}\cmidrule(lr){11-13}\cmidrule(lr){14-16}
 & acc. & n.acc. & gain & acc. & n.acc. & gain & acc. & n.acc. & gain & acc. & n.acc. & gain & acc. & n.acc. & gain \\
\midrule
Random & 50.01 & 50.01 & 0.0 & 49.99 & 49.99 & 0.0 & 25.02 & 25.02 & 0.0 & 50.01 & 50.01 & 0.0 & 50.01 & 50.01 & 0.0 \\
\midrule

\textcolor[gray]{0.45}{CLIP} & \textcolor[gray]{0.45}{49.64} & \textcolor[gray]{0.45}{49.69} & \cellcolor{red!25}\textcolor[gray]{0.45}{$-0.1$} & \textcolor[gray]{0.45}{45.30} & \textcolor[gray]{0.45}{47.28} & \cellcolor{red!25}\textcolor[gray]{0.45}{$-2.0$} & \textcolor[gray]{0.45}{26.96} & \textcolor[gray]{0.45}{25.98} & \cellcolor{red!25}\textcolor[gray]{0.45}{$+1.0$} & \textcolor[gray]{0.45}{53.67} & \textcolor[gray]{0.45}{52.83} & \cellcolor{red!25}\textcolor[gray]{0.45}{$+0.8$} & \textcolor[gray]{0.45}{53.87} & \textcolor[gray]{0.45}{53.21} & \cellcolor{red!25}\textcolor[gray]{0.45}{$+0.7$} \\
\quad \dirpix{} & 90.45 & 49.03 & \cellcolor{red!25}$+41.4$ & 92.33 & 51.73 & \cellcolor{red!25}$+40.6$ & 31.62 & 25.74 & \cellcolor{red!25}$+5.9$ & 93.22 & 52.26 & \cellcolor{red!25}$+41.0$ & 73.86 & 47.79 & \cellcolor{red!25}$+26.1$ \\
\quad \dirdense{} & 90.22 & 49.44 & \cellcolor{red!25}$+40.8$ & 94.06 & 52.72 & \cellcolor{red!25}$+41.3$ & 41.18 & 25.25 & \cellcolor{red!25}$+15.9$ & 98.02 & 50.00 & \cellcolor{red!25}$+48.0$ & 74.94 & 47.88 & \cellcolor{red!25}$+27.1$ \\
\textcolor[gray]{0.45}{LAION-2B} & \textcolor[gray]{0.45}{49.13} & \textcolor[gray]{0.45}{49.77} & \cellcolor{red!25}\textcolor[gray]{0.45}{$-0.6$} & \textcolor[gray]{0.45}{43.07} & \textcolor[gray]{0.45}{47.28} & \cellcolor{red!25}\textcolor[gray]{0.45}{$-4.2$} & \textcolor[gray]{0.45}{27.70} & \textcolor[gray]{0.45}{26.23} & \cellcolor{red!25}\textcolor[gray]{0.45}{$+1.5$} & \textcolor[gray]{0.45}{50.00} & \textcolor[gray]{0.45}{50.28} & \cellcolor{red!25}\textcolor[gray]{0.45}{$-0.3$} & \textcolor[gray]{0.45}{51.96} & \textcolor[gray]{0.45}{52.96} & \cellcolor{red!25}\textcolor[gray]{0.45}{$-1.0$} \\
\quad \dirpix{} & 89.05 & 50.23 & \cellcolor{red!25}$+38.8$ & 91.83 & 51.73 & \cellcolor{red!25}$+40.1$ & 31.86 & 23.28 & \cellcolor{red!25}$+8.6$ & 97.74 & 52.26 & \cellcolor{red!25}$+45.5$ & 74.19 & 48.96 & \cellcolor{red!25}$+25.2$ \\
\quad \dirdense{} & 88.41 & 50.13 & \cellcolor{red!25}$+38.3$ & 93.56 & 51.24 & \cellcolor{red!25}$+42.3$ & 38.73 & 25.25 & \cellcolor{red!25}$+13.5$ & 99.15 & 52.54 & \cellcolor{red!25}$+46.6$ & 74.94 & 50.04 & \cellcolor{red!25}$+24.9$ \\
\textcolor[gray]{0.45}{DFN5B} & \textcolor[gray]{0.45}{50.46} & \textcolor[gray]{0.45}{50.89} & \cellcolor{red!25}\textcolor[gray]{0.45}{$-0.4$} & \textcolor[gray]{0.45}{44.80} & \textcolor[gray]{0.45}{48.51} & \cellcolor{red!25}\textcolor[gray]{0.45}{$-3.7$} & \textcolor[gray]{0.45}{28.92} & \textcolor[gray]{0.45}{27.21} & \cellcolor{red!25}\textcolor[gray]{0.45}{$+1.7$} & \textcolor[gray]{0.45}{53.11} & \textcolor[gray]{0.45}{51.69} & \cellcolor{red!25}\textcolor[gray]{0.45}{$+1.4$} & \textcolor[gray]{0.45}{54.04} & \textcolor[gray]{0.45}{52.21} & \cellcolor{red!25}\textcolor[gray]{0.45}{$+1.8$} \\
\quad \dirpix{} & 91.22 & 47.93 & \cellcolor{red!25}$+43.3$ & 97.28 & 50.99 & \cellcolor{red!25}$+46.3$ & 26.96 & 25.49 & \cellcolor{red!25}$+1.5$ & 98.59 & 51.13 & \cellcolor{red!25}$+47.5$ & 75.85 & 48.46 & \cellcolor{red!25}$+27.4$ \\
\quad \dirdense{} & 91.88 & 48.95 & \cellcolor{red!25}$+42.9$ & 98.27 & 51.24 & \cellcolor{red!25}$+47.0$ & 63.24 & 24.75 & \cellcolor{red!25}$+38.5$ & 100.00 & 51.69 & \cellcolor{red!25}$+48.3$ & 77.10 & 46.96 & \cellcolor{red!25}$+30.1$ \\
\textcolor[gray]{0.45}{MetaCLIP} & \textcolor[gray]{0.45}{50.33} & \textcolor[gray]{0.45}{48.72} & \cellcolor{red!25}\textcolor[gray]{0.45}{$+1.6$} & \textcolor[gray]{0.45}{48.51} & \textcolor[gray]{0.45}{49.75} & \cellcolor{red!25}\textcolor[gray]{0.45}{$-1.2$} & \textcolor[gray]{0.45}{27.70} & \textcolor[gray]{0.45}{23.28} & \cellcolor{red!25}\textcolor[gray]{0.45}{$+4.4$} & \textcolor[gray]{0.45}{51.41} & \textcolor[gray]{0.45}{49.44} & \cellcolor{red!25}\textcolor[gray]{0.45}{$+2.0$} & \textcolor[gray]{0.45}{54.62} & \textcolor[gray]{0.45}{53.96} & \cellcolor{red!25}\textcolor[gray]{0.45}{$+0.7$} \\
\quad \dirpix{} & 90.76 & 49.34 & \cellcolor{red!25}$+41.4$ & 92.57 & 48.51 & \cellcolor{red!25}$+44.1$ & 26.72 & 25.25 & \cellcolor{red!25}$+1.5$ & 93.50 & 46.33 & \cellcolor{red!25}$+47.2$ & 73.86 & 48.88 & \cellcolor{red!25}$+25.0$ \\
\quad \dirdense{} & 91.04 & 49.62 & \cellcolor{red!25}$+41.4$ & 94.80 & 48.02 & \cellcolor{red!25}$+46.8$ & 52.70 & 25.00 & \cellcolor{red!25}$+27.7$ & 100.00 & 51.42 & \cellcolor{red!25}$+48.6$ & 74.85 & 49.71 & \cellcolor{red!25}$+25.1$ \\
\textcolor[gray]{0.45}{SigLIP2} & \textcolor[gray]{0.45}{49.95} & \textcolor[gray]{0.45}{50.54} & \cellcolor{red!25}\textcolor[gray]{0.45}{$-0.6$} & \textcolor[gray]{0.45}{49.01} & \textcolor[gray]{0.45}{45.05} & \cellcolor{red!25}\textcolor[gray]{0.45}{$+4.0$} & \textcolor[gray]{0.45}{29.17} & \textcolor[gray]{0.45}{24.75} & \cellcolor{red!25}\textcolor[gray]{0.45}{$+4.4$} & \textcolor[gray]{0.45}{51.69} & \textcolor[gray]{0.45}{48.87} & \cellcolor{red!25}\textcolor[gray]{0.45}{$+2.8$} & \textcolor[gray]{0.45}{54.70} & \textcolor[gray]{0.45}{52.96} & \cellcolor{red!25}\textcolor[gray]{0.45}{$+1.7$} \\
\quad \dirpix{} & 92.09 & 49.46 & \cellcolor{red!25}$+42.6$ & 96.53 & 52.48 & \cellcolor{red!25}$+44.1$ & 74.26 & 26.96 & \cellcolor{red!25}$+47.3$ & 100.00 & 50.85 & \cellcolor{red!25}$+49.2$ & 76.10 & 48.96 & \cellcolor{red!25}$+27.1$ \\
\quad \dirdense{} & 92.32 & 50.36 & \cellcolor{red!25}$+42.0$ & 97.52 & 51.24 & \cellcolor{red!25}$+46.3$ & 80.39 & 24.51 & \cellcolor{red!25}$+55.9$ & 100.00 & 51.42 & \cellcolor{red!25}$+48.6$ & 77.35 & 49.63 & \cellcolor{red!25}$+27.7$ \\
\textcolor[gray]{0.45}{BLIP} & \textcolor[gray]{0.45}{49.31} & \textcolor[gray]{0.45}{49.00} & \cellcolor{red!25}\textcolor[gray]{0.45}{$+0.3$} & \textcolor[gray]{0.45}{45.79} & \textcolor[gray]{0.45}{47.52} & \cellcolor{red!25}\textcolor[gray]{0.45}{$-1.7$} & \textcolor[gray]{0.45}{27.70} & \textcolor[gray]{0.45}{25.74} & \cellcolor{red!25}\textcolor[gray]{0.45}{$+2.0$} & \textcolor[gray]{0.45}{52.82} & \textcolor[gray]{0.45}{49.43} & \cellcolor{red!25}\textcolor[gray]{0.45}{$+3.4$} & \textcolor[gray]{0.45}{52.96} & \textcolor[gray]{0.45}{52.21} & \cellcolor{red!25}\textcolor[gray]{0.45}{$+0.8$} \\
\quad \dirpix{} & 89.51 & 48.47 & \cellcolor{red!25}$+41.0$ & 90.59 & 52.72 & \cellcolor{red!25}$+37.9$ & 27.70 & 27.94 & \cellcolor{red!25}$-0.2$ & 96.61 & 51.13 & \cellcolor{red!25}$+45.5$ & 72.94 & 49.63 & \cellcolor{red!25}$+23.3$ \\
\quad \dirdense{} & 88.97 & 49.11 & \cellcolor{red!25}$+39.9$ & 95.79 & 51.24 & \cellcolor{red!25}$+44.6$ & 41.91 & 25.00 & \cellcolor{red!25}$+16.9$ & 100.00 & 50.56 & \cellcolor{red!25}$+49.4$ & 75.69 & 48.79 & \cellcolor{red!25}$+26.9$ \\
\textcolor[gray]{0.45}{NegCLIP} & \textcolor[gray]{0.45}{50.26} & \textcolor[gray]{0.45}{49.67} & \cellcolor{red!25}\textcolor[gray]{0.45}{$+0.6$} & \textcolor[gray]{0.45}{44.80} & \textcolor[gray]{0.45}{49.50} & \cellcolor{red!25}\textcolor[gray]{0.45}{$-4.7$} & \textcolor[gray]{0.45}{33.33} & \textcolor[gray]{0.45}{26.23} & \cellcolor{red!25}\textcolor[gray]{0.45}{$+7.1$} & \textcolor[gray]{0.45}{48.31} & \textcolor[gray]{0.45}{51.69} & \cellcolor{red!25}\textcolor[gray]{0.45}{$-3.4$} & \textcolor[gray]{0.45}{52.87} & \textcolor[gray]{0.45}{52.79} & \cellcolor{red!25}\textcolor[gray]{0.45}{$+0.1$} \\
\quad \dirpix{} & 89.13 & 48.65 & \cellcolor{red!25}$+40.5$ & 91.83 & 53.71 & \cellcolor{red!25}$+38.1$ & 28.68 & 24.26 & \cellcolor{red!25}$+4.4$ & 93.79 & 47.18 & \cellcolor{red!25}$+46.6$ & 73.94 & 49.54 & \cellcolor{red!25}$+24.4$ \\
\quad \dirdense{} & 89.13 & 49.18 & \cellcolor{red!25}$+40.0$ & 93.07 & 52.97 & \cellcolor{red!25}$+40.1$ & 33.82 & 25.00 & \cellcolor{red!25}$+8.8$ & 96.89 & 49.15 & \cellcolor{red!25}$+47.7$ & 74.10 & 49.46 & \cellcolor{red!25}$+24.6$ \\
\textcolor[gray]{0.45}{CLoVe} & \textcolor[gray]{0.45}{48.83} & \textcolor[gray]{0.45}{50.05} & \cellcolor{red!25}\textcolor[gray]{0.45}{$-1.2$} & \textcolor[gray]{0.45}{51.49} & \textcolor[gray]{0.45}{50.99} & \cellcolor{red!25}\textcolor[gray]{0.45}{$+0.5$} & \textcolor[gray]{0.45}{34.56} & \textcolor[gray]{0.45}{23.77} & \cellcolor{red!25}\textcolor[gray]{0.45}{$+10.8$} & \textcolor[gray]{0.45}{48.59} & \textcolor[gray]{0.45}{47.74} & \cellcolor{red!25}\textcolor[gray]{0.45}{$+0.9$} & \textcolor[gray]{0.45}{50.96} & \textcolor[gray]{0.45}{51.79} & \cellcolor{red!25}\textcolor[gray]{0.45}{$-0.8$} \\
\quad \dirpix{} & 88.57 & 48.29 & \cellcolor{red!25}$+40.3$ & 90.84 & 55.45 & \cellcolor{red!25}$+35.4$ & 30.64 & 25.25 & \cellcolor{red!25}$+5.4$ & 95.20 & 51.41 & \cellcolor{red!25}$+43.8$ & 74.69 & 48.96 & \cellcolor{red!25}$+25.7$ \\
\quad \dirdense{} & 88.51 & 48.77 & \cellcolor{red!25}$+39.7$ & 93.32 & 52.72 & \cellcolor{red!25}$+40.6$ & 38.24 & 26.23 & \cellcolor{red!25}$+12.0$ & 99.43 & 51.13 & \cellcolor{red!25}$+48.3$ & 75.52 & 49.29 & \cellcolor{red!25}$+26.2$ \\
\textcolor[gray]{0.45}{CE-CLIP} & \textcolor[gray]{0.45}{51.30} & \textcolor[gray]{0.45}{50.94} & \cellcolor{red!25}\textcolor[gray]{0.45}{$+0.4$} & \textcolor[gray]{0.45}{52.48} & \textcolor[gray]{0.45}{50.74} & \cellcolor{red!25}\textcolor[gray]{0.45}{$+1.7$} & \textcolor[gray]{0.45}{31.13} & \textcolor[gray]{0.45}{27.45} & \cellcolor{red!25}\textcolor[gray]{0.45}{$+3.7$} & \textcolor[gray]{0.45}{50.00} & \textcolor[gray]{0.45}{49.44} & \cellcolor{red!25}\textcolor[gray]{0.45}{$+0.6$} & \textcolor[gray]{0.45}{53.29} & \textcolor[gray]{0.45}{52.71} & \cellcolor{red!25}\textcolor[gray]{0.45}{$+0.6$} \\
\quad \dirpix{} & 90.20 & 48.21 & \cellcolor{red!25}$+42.0$ & 92.82 & 55.69 & \cellcolor{red!25}$+37.1$ & 30.15 & 25.00 & \cellcolor{red!25}$+5.2$ & 92.09 & 48.87 & \cellcolor{red!25}$+43.2$ & 74.85 & 49.46 & \cellcolor{red!25}$+25.4$ \\
\quad \dirdense{} & 89.84 & 48.34 & \cellcolor{red!25}$+41.5$ & 91.83 & 54.70 & \cellcolor{red!25}$+37.1$ & 31.13 & 25.25 & \cellcolor{red!25}$+5.9$ & 92.94 & 49.72 & \cellcolor{red!25}$+43.2$ & 74.77 & 49.21 & \cellcolor{red!25}$+25.6$ \\
\bottomrule
\end{tabular*}
\end{table}

\parag{Data.} We evaluate on eight subsets from ARO~\citep{yuksekgonul2023when}, What'sUp~\citep{kamath2023whatsup}, VSR~\citep{liu2023visual} and VALSE~\citep{parcalabescu2022valse}. The main and alternative readout comparisons use five direction-balanced subsets: VG-Relation left/right, COCO-spatial, What'sUp-A, What'sUp-B and VSR horizontal. What'sUp-A has four answer choices; the other four subsets are binary. The remaining three subsets are used to examine prior effects in Sec.~\ref{sec:deflation}. Full dataset details are given in Tab.~\ref{tab:data-overview}.

\parag{Implementation.} We compare ADR with deployed readouts across nine model families (Tab.~\ref{tab:main}), including CLIP~\citep{radford2021learning}, SigLIP2~\citep{tschannen2025siglip} and compositional fine-tunes~\citep{yuksekgonul2023when,castro2024clove,zhang2024contrasting}. Alternative baselines include image-text matching heads~\citep{li2022blip,li2023blip,singh2022flava}, the half-crop readout (App.~\ref{app:halfcrop}), and likelihood and chat readouts of LLaVA-1.5~\citep{liu2024improved} and Qwen3.6~\mbox{\citep{qwen36_27b}} (Tab.~\ref{tab:alternatives}). The full model list and evaluation coverage are given in App.~\ref{app:models}.
All model weights remain frozen during evaluation. We use both \dirpix{} and \dirdense{} with $\varepsilon=0.05$ fixed across encoders and datasets. Sinkhorn scaling uses at most $200$ iterations with a stopping tolerance of $10^{-9}$. Feature extraction follows App.~\ref{app:poolingrule}, and MLLM settings are given in App.~\ref{app:mllmforms}.

\parag{Metrics.} Following the evaluation protocol in Sec.~\ref{sec:deflation}, we use grounded gain as the main measure for comparing readouts and report accuracy under the original pairing and the derangement null. For VSR baselines using oracle thresholds, real and null accuracies are optimized separately (App.~\ref{app:scoring}).

\subsection{Main Results}
\label{sec:grid}

\begin{table}[t]
\caption{Deployed readouts, alternative readouts and ADR on five direction-balanced sets. Each subset reports accuracy with the original images (acc.), derangement-null accuracy (n.acc.), and grounded gain (gain). Accuracies in percent (\%) and gains in percentage points. All readouts use the same evaluation items as Tab.~\ref{tab:main}. \colorbox{red!25}{Red} highlights grounded gain. \textbf{Bold} marks the highest gain.}
\label{tab:alternatives}
\centering
\fontsize{8}{9.5}
\selectfont
\setlength{\tabcolsep}{0.2pt}
\begin{tabular*}{\columnwidth}{@{\extracolsep{\fill}}l*{15}{c}@{}}
\toprule
\multirow{2}{*}{Readout} & \multicolumn{3}{c}{VG-Rel L/R} & \multicolumn{3}{c}{COCO} & \multicolumn{3}{c}{WU-A} & \multicolumn{3}{c}{WU-B} & \multicolumn{3}{c}{VSR} \\
\cmidrule(lr){2-4}\cmidrule(lr){5-7}\cmidrule(lr){8-10}\cmidrule(lr){11-13}\cmidrule(lr){14-16}
 & acc. & n.acc. & gain & acc. & n.acc. & gain & acc. & n.acc. & gain & acc. & n.acc. & gain & acc. & n.acc. & gain \\
\midrule
\multicolumn{16}{@{}l}{\emph{Deployed readouts}} \\
CLIP &
49.64 &
49.69 &
\cellcolor{red!25}$-0.1$ &
45.30 &
47.28 &
\cellcolor{red!25}$-2.0$ &
26.96 &
25.98 &
\cellcolor{red!25}$+1.0$ &
53.67 &
52.83 &
\cellcolor{red!25}$+0.8$ &
53.87 &
53.21 &
\cellcolor{red!25}$+0.7$ \\
SigLIP2 &
49.95 &
50.54 &
\cellcolor{red!25}$-0.6$ &
49.01 &
45.05 &
\cellcolor{red!25}$+4.0$ &
29.17 &
24.75 &
\cellcolor{red!25}$+4.4$ &
51.69 &
48.87 &
\cellcolor{red!25}$+2.8$ &
54.70 &
52.96 &
\cellcolor{red!25}$+1.7$ \\
\midrule
\multicolumn{16}{@{}l}{\emph{Half-crop}} \\
CLIP & 83.67 & 49.16 & \cellcolor{red!25}$+34.5$ & 82.92 & 55.45 & \cellcolor{red!25}$+27.5$ & 44.61 & 22.30 & \cellcolor{red!25}$+22.3$ & 100.00 & 50.00 & \cellcolor{red!25}$\mathbf{+50.0}$ & 70.52 & 47.88 & \cellcolor{red!25}$+22.6$ \\
SigLIP2 & 88.46 & 49.69 & \cellcolor{red!25}$+38.8$ & 91.83 & 49.50 & \cellcolor{red!25}$+42.3$ & 57.35 & 25.25 & \cellcolor{red!25}$+32.1$ & 99.72 & 51.70 & \cellcolor{red!25}$+48.0$ & 75.94 & 50.12 & \cellcolor{red!25}$+25.8$ \\
\midrule
\multicolumn{16}{@{}l}{\emph{Cross-encoder matching heads}} \\
BLIP-ITM base & 51.02 & 50.79 & \cellcolor{red!25}$+0.2$ & 55.94 & 48.27 & \cellcolor{red!25}$+7.7$ & 35.78 & 26.23 & \cellcolor{red!25}$+9.6$ & 55.37 & 49.15 & \cellcolor{red!25}$+6.2$ & 54.29 & 50.96 & \cellcolor{red!25}$+3.3$ \\
BLIP-ITM large & 50.38 & 49.62 & \cellcolor{red!25}$+0.8$ & 55.20 & 47.77 & \cellcolor{red!25}$+7.4$ & 31.74 & 24.51 & \cellcolor{red!25}$+7.2$ & 60.45 & 46.89 & \cellcolor{red!25}$+13.6$ & 54.29 & 50.96 & \cellcolor{red!25}$+3.3$ \\
BLIP-2 ViT-g & 49.90 & 49.36 & \cellcolor{red!25}$+0.5$ & 58.66 & 50.00 & \cellcolor{red!25}$+8.7$ & 44.61 & 21.32 & \cellcolor{red!25}$+23.3$ & 59.32 & 48.87 & \cellcolor{red!25}$+10.5$ & 54.04 & 51.37 & \cellcolor{red!25}$+2.7$ \\
FLAVA-full & 50.28 & 50.87 & \cellcolor{red!25}$-0.6$ & 54.70 & 51.49 & \cellcolor{red!25}$+3.2$ & 30.15 & 26.47 & \cellcolor{red!25}$+3.7$ & 51.13 & 48.02 & \cellcolor{red!25}$+3.1$ & 52.29 & 50.87 & \cellcolor{red!25}$+1.4$ \\
\midrule
\multicolumn{16}{@{}l}{\emph{MLLM likelihood}} \\
LLaVA-1.5-7B & 65.80 & 49.44 & \cellcolor{red!25}$+16.4$ & 79.21 & 54.21 & \cellcolor{red!25}$+25.0$ & 30.39 & 24.26 & \cellcolor{red!25}$+6.1$ & 63.56 & 52.83 & \cellcolor{red!25}$+10.7$ & 55.45 & 52.96 & \cellcolor{red!25}$+2.5$ \\
Qwen3.6-27B & 74.76 & 49.29 & \cellcolor{red!25}$+25.5$ & 88.12 & 49.01 & \cellcolor{red!25}$+39.1$ & 67.89 & 24.51 & \cellcolor{red!25}$+43.4$ & 79.09 & 50.56 & \cellcolor{red!25}$+28.5$ & 61.62 & 52.96 & \cellcolor{red!25}$+8.7$ \\
\midrule
\multicolumn{16}{@{}l}{\emph{MLLM chat}} \\
LLaVA-1.5-7B & 65.74 & 50.10 & \cellcolor{red!25}$+15.6$ & 65.59 & 50.74 & \cellcolor{red!25}$+14.9$ & 44.00 & 21.45 & \cellcolor{red!25}$+22.6$ & 65.11 & 51.41 & \cellcolor{red!25}$+13.7$ & 63.70 & 50.04 & \cellcolor{red!25}$+13.7$ \\
Qwen3.6-27B & 89.74 & 49.07 & \cellcolor{red!25}$+40.7$ & 96.16 & 51.11 & \cellcolor{red!25}$+45.1$ & 99.39 & 32.97 & \cellcolor{red!25}$\mathbf{+66.4}$ & 100.00 & 50.00 & \cellcolor{red!25}$\mathbf{+50.0}$ & 88.01 & 49.71 & \cellcolor{red!25}$\mathbf{+38.3}$ \\
\midrule
\multicolumn{16}{@{}l}{\emph{ADR on CLIP}} \\
\dirpix{} & 90.45 & 49.03 & \cellcolor{red!25}$+41.4$ & 92.33 & 51.73 & \cellcolor{red!25}$+40.6$ & 31.62 & 25.74 & \cellcolor{red!25}$+5.9$ & 93.22 & 52.26 & \cellcolor{red!25}$+41.0$ & 73.86 & 47.79 & \cellcolor{red!25}$+26.1$ \\
\dirdense{} &
90.22 &
49.44 &
\cellcolor{red!25}$+40.8$ &
94.06 &
52.72 &
\cellcolor{red!25}$+41.3$ &
41.18 &
25.25 &
\cellcolor{red!25}$+15.9$ &
98.02 &
50.00 &
\cellcolor{red!25}$+48.0$ &
74.94 &
47.88 &
\cellcolor{red!25}$+27.1$ \\
\midrule
\multicolumn{16}{@{}l}{\emph{ADR on SigLIP2}} \\
\dirpix{} & 92.09 & 49.46 & \cellcolor{red!25}$\mathbf{+42.6}$ & 96.53 & 52.48 & \cellcolor{red!25}$+44.1$ & 74.26 & 26.96 & \cellcolor{red!25}$+47.3$ & 100.00 & 50.85 & \cellcolor{red!25}$+49.2$ & 76.10 & 48.96 & \cellcolor{red!25}$+27.1$ \\
\dirdense{} & 92.32 & 50.36 & \cellcolor{red!25}$+42.0$ & 97.52 & 51.24 & \cellcolor{red!25}$\mathbf{+46.3}$ & 80.39 & 24.51 & \cellcolor{red!25}$+55.9$ & 100.00 & 51.42 & \cellcolor{red!25}$+48.6$ & 77.35 & 49.63 & \cellcolor{red!25}$+27.7$ \\
\bottomrule
\end{tabular*}
\end{table}

Tab.~\ref{tab:main} compares deployed readouts with both ADR variants across nine model families on the five direction-balanced subsets. On four of the five sets, both variants achieve large grounded gains across all families, while the deployed scores remain close to chance. ADR recovers direction from the same frozen features without further encoder training. This supports readout blindness as a source of directional failure. Full results across checkpoints and readouts are given in App.~\ref{app:grid}.

What'sUp-A is the main exception. On this set, \dirpix{} offers little improvement over the deployed readout for eight families, while SigLIP2 achieves a grounded gain of $+47.3$ percentage points. The error analysis finds fewer localization failures for SigLIP2 than for CLIP, consistent with this advantage (App.~\ref{app:wualoc}).
On VSR, some remaining errors arise from a reference-frame mismatch, as ADR follows viewer-centered relations while some annotations are object-centered (App.~\ref{app:vsrmain}).

Localization weights also affect performance. \dirdense{} gives larger grounded gains in most comparisons, especially on What'sUp-A, while \dirpix{} generally performs better on VG-Relation left/right. The dense weights $\pi_d$ assign mass to each word independently, without the shared patch budget imposed by OT. This can improve localization when objects differ in size (App.~\ref{app:localizers}). We keep \dirpix{} as the default for the reasons in Sec.~\ref{sec:method} and report both variants to show this trade-off.

\begin{figure}[t]
\centering
\includegraphics[width=\columnwidth]{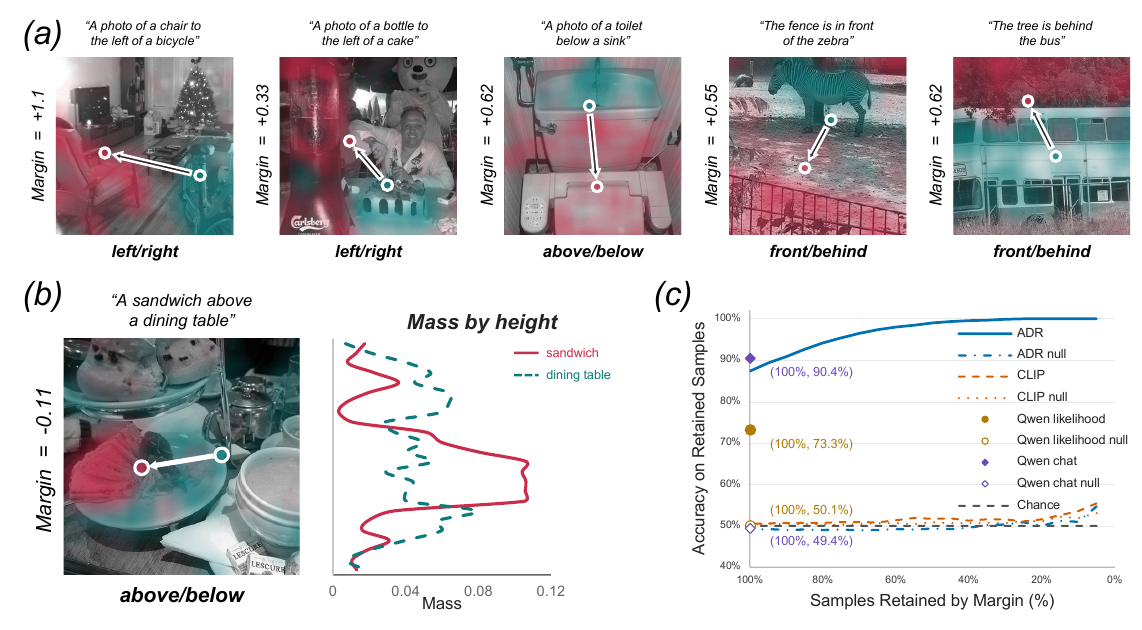}\par
\caption{ADR predictions and reliability. (a) Correct predictions for horizontal, vertical and depth relations. \textcolor[HTML]{C92D4B}{Red} and \textcolor[HTML] {0D7A7C}{blue green} maps show subject and object excess mass. Circles mark centroids, and arrows point from object to subject. Margin is the score difference between correct and incorrect captions, $D(C^+)-D(C^-)$ for ADR. (b) A localization failure of \dirpix{}. Curves show the fraction of each word's excess mass at each image height. The diffuse mass for "dining" produces an incorrect centroid order and prediction. (c) Accuracy on the binary direction-balanced sets after retaining samples with the largest absolute margins. Increasing the margin threshold retains fewer samples but yields higher ADR accuracy, showing that its margin helps select reliable predictions.}

\label{fig:measurement}
\end{figure}

\subsection{Comparison with Alternative Readouts}
\label{sec:alternatives}

Sec.~\ref{sec:impossibility} discusses alternative readouts that partially recover direction. We now compare ADR with these readouts on the five direction-balanced subsets in Tab.~\ref{tab:alternatives}. SigLIP2-based ADR achieves the largest grounded gains among the evaluated non-chat readouts on four of the five sets and is competitive with Qwen3.6 chat. Even with CLIP, ADR outperforms most non-chat alternatives.

Half-crop recovers directional signal by comparing image regions. Cross-encoder matching heads use joint attention to score image-text pairs. MLLMs score answers by likelihood or generate them through chat. All three approaches achieve positive grounded gains on most tasks, and chat further improves Qwen3.6 over likelihood, but has mixed effects on LLaVA-1.5. These gains further support that directional failures can arise from readout blindness rather than missing directional information.

However, half-crop changes the image input and requires multiple image encodings. Matching heads learn image-text compatibility without an explicit rule for comparing object positions, and their gains remain limited on several tasks. MLLM readouts require much more computation than ADR, which adds no learned parameters and little memory overhead. On one H200 GPU, SigLIP2-based \dirpix{} uses about $6\%$ of Qwen3.6 chat's forward FLOPs and $9\%$ of its runtime (Tab.~\ref{tab:cost}).

\subsection{Prediction Reliability and Error Analysis}
\label{sec:reliability}

Fig.~\ref{fig:measurement}(a) shows correct ADR predictions for horizontal, vertical and depth relations. The excess-mass maps and centroids show the estimated object locations and how ADR uses the relative positions to make the prediction. These also help assess how well the features support object localization. The same visualizations show how localization errors affect predictions. Fig.~\ref{fig:measurement}(b) shows a failure to localize the dining table. The excess-mass
map and distribution by image height show diffuse weights for "dining", whose centroid lies above the sandwich centroid. ADR therefore incorrectly predicts that the sandwich is below the dining table. These measurements help identify the mislocalized object and explain how its estimated position leads to the wrong prediction.

Beyond individual cases, the absolute margin also serves as a confidence score. Fig.~\ref{fig:measurement}(c) compares prediction reliability on the four binary sets in Tab.~\ref{tab:main}. Increasing the margin threshold retains fewer samples and raises ADR accuracy. Retaining the $30\%$ of samples with the largest absolute margins gives $99.8\%$ accuracy for CLIP-based \dirpix{}, compared with $51.6\%$ for deployed CLIP. Qwen3.6 likelihood and chat achieve $73.3\%$ and $90.4\%$ accuracy on all samples. The ADR margin therefore helps identify samples with reliable predictions. In practice, a threshold chosen on a validation set could decide which test samples have reliable predictions and which need human review.

\section{Conclusion}
\label{sec:conclusion}

We identify readout blindness as a source of directional failure in VLMs and show why some scoring rules cannot distinguish direction. This analysis leads to ADR, which reads direction from an explicit alignment of frozen features without training or learned parameters. We also introduce an evaluation kit that separates grounded gain from prior effects. ADR improves grounded gain over deployed scores on most tested sets, showing that the frozen features retain direction that an appropriate readout can recover. Future work includes broader multimodal alignment tasks, more complex relations beyond noun pairs, and extensions to a wider range of VLM architectures.

\subsubsection*{Acknowledgments}
This work was supported by the Research Council of Finland through Academy Project 376209. This study was also supported by ELLIS Institute Finland and the School of Electrical Engineering, Aalto University. We acknowledge CSC – IT Center for Science, Finland, for providing access to Roihu and LUMI, owned by the EuroHPC Joint Undertaking and hosted by CSC Finland in collaboration with the LUMI consortium. We also acknowledge the computational resources provided by Aalto Science-IT through the Triton cluster.

\bibliography{bib/string, bib/reference}
\bibliographystyle{iclr2027_conference}

\appendix
\providecommand{\pizero}{\pi_0}
\providecommand{\dirpi}{\textsf{ADR-}\ensuremath{\pi_0}}
\providecommand{\dirargmax}{\textsf{ADR-argmax}}
\providecommand{\tr}{\operatorname{tr}}
\providecommand{\ci}[2]{{\footnotesize[#1,\,#2]}}
\providecommand{\toprule}{\hline}
\providecommand{\midrule}{\hline}
\providecommand{\bottomrule}{\hline}
\providecommand{\addlinespace}{}

\section{Evidence for Readout Blindness and Recovery}
\label{app:tables}

\subsection{Global Cosine under Argument Swaps}
\label{app:cosine-diagnostic}
We examine the global image and caption features of CLIP ViT-B/16 and SigLIP2-So400m on the same $3,918$ VG-Relation left/right items as Tab.~\ref{tab:main}. We retain the original image crops, caption pairs and null assignments. For each pair, we measure the text difference $\|\bt(C^+)-\bt(C^-)\|_2$ and the global margin $\bx(I)^\top[\bt(C^+)-\bt(C^-)]$, using unit-norm features. The recomputed scores reproduce every recorded real and null decision.

\begin{table}[h]
\caption{Global cosine diagnostics on VG-Relation left/right ($n=3918$). Text differences are Euclidean distances between unit-norm caption features. The last two rows restrict the comparison to items correctly ranked by \dirpix{}. Accuracies are percentages.}
\label{tab:cosine-diagnostic}
\centering\small
\begin{tabular*}{\textwidth}{@{\extracolsep{\fill}}lcc@{}}
\toprule
 & CLIP ViT-B/16 & SigLIP2-So400m \\
\midrule
Median text difference & 0.2251 & 0.2494 \\
Pairs with identical text features & 0 & 0 \\
Global score ties & 0 & 0 \\
Global real / null accuracy & 49.64 / 49.69 & 49.95 / 50.54 \\
Global wrong, \dirpix{} correct, item count & 1785 & 1799 \\
\midrule
Median text difference, global wrong and \dirpix{} correct & 0.2264 & 0.2505 \\
Median text difference, both correct & 0.2276 & 0.2510 \\
\bottomrule
\end{tabular*}
\end{table}

Tab.~\ref{tab:cosine-diagnostic} shows that every argument swap changes the text features. The global scores nevertheless remain near chance, with no ties under the $10^{-9}$ tolerance. Among items correctly ranked by \dirpix{}, the median text difference is similar whether the global score is correct or wrong. These observations rule out exact text invariance on these pairs and do not support smaller text differences as a simple explanation for the errors. They do not determine whether direction is lost during image pooling or fails to align with the text difference in the global feature space. ADR succeeds on many of the same items using local features and patch coordinates.

\subsection{Prior Effects Across Evaluation Subsets}
\label{app:prior-all}
Tab.~\ref{tab:prior-all} extends the two examples in Tab.~\ref{tab:prior} to all eight evaluation subsets. It uses the same five readouts throughout. Null accuracies are generally close to random guessing on the five direction-balanced subsets, while the three subsets reported separately show larger deviations. The VG-Relation verb subset is selected from the benchmark's relation labels, while actant swap is the subset name used by VALSE~\citep{parcalabescu2022valse}.

\begin{table}[t]
\caption{Prior effects across five readouts on all eight evaluation subsets. The five direction-balanced subsets are used for the main comparisons, and the three prior-biased subsets are reported separately. Acc., n.acc., dev. and gain follow Tab.~\ref{tab:prior}. Random is accuracy from uniformly random predictions. Accuracies are in percent (\%), and dev. and gain in percentage points.}
\label{tab:prior-all}
\centering
\fontsize{8}{9.5}\selectfont
\setlength{\tabcolsep}{3pt}
\begin{tabular*}{\columnwidth}{@{\extracolsep{\fill}}lclccccc@{}}
\toprule
\multirow{2}{*}{Subset} & \multirow{2}{*}{random} & \multirow{2}{*}{metric} & \multirow{2}{*}{CLIP} & \multirow{2}{*}{NegCLIP} & \multirow{2}{*}{SigLIP2} & LLaVA-1.5 & Qwen3.6 \\
 & & & & & & likelihood & likelihood \\
\midrule
\multicolumn{8}{@{}l}{\emph{Direction-balanced subsets}} \\
\multirow{4}{*}{\shortstack[l]{VG-Rel\\left/right}} & \multirow{4}{*}{50.01} & acc. & 49.64 & 50.26 & 49.95 & 65.80 & 74.76 \\
& & n.acc. & 49.69 & 49.67 & 50.54 & 49.44 & 49.29 \\
& & dev. & $-0.3$ & $-0.3$ & $+0.5$ & $-0.6$ & $-0.7$ \\
& & gain & $-0.1$ & $+0.6$ & $-0.6$ & $+16.4$ & $+25.5$ \\
\midrule
\multirow{4}{*}{\shortstack[l]{COCO-spatial}} & \multirow{4}{*}{49.99} & acc. & 45.30 & 44.80 & 49.01 & 79.21 & 88.12 \\
& & n.acc. & 47.28 & 49.50 & 45.05 & 54.21 & 49.01 \\
& & dev. & $-2.7$ & $-0.5$ & $-4.9$ & $+4.2$ & $-1.0$ \\
& & gain & $-2.0$ & $-4.7$ & $+4.0$ & $+25.0$ & $+39.1$ \\
\midrule
\multirow{4}{*}{\shortstack[l]{What'sUp-A}} & \multirow{4}{*}{25.02} & acc. & 26.96 & 33.33 & 29.17 & 30.39 & 67.89 \\
& & n.acc. & 25.98 & 26.23 & 24.75 & 24.26 & 24.51 \\
& & dev. & $+1.0$ & $+1.2$ & $-0.3$ & $-0.8$ & $-0.5$ \\
& & gain & $+1.0$ & $+7.1$ & $+4.4$ & $+6.1$ & $+43.4$ \\
\midrule
\multirow{4}{*}{\shortstack[l]{What'sUp-B}} & \multirow{4}{*}{50.01} & acc. & 53.67 & 48.31 & 51.69 & 63.56 & 79.09 \\
& & n.acc. & 52.83 & 51.69 & 48.87 & 52.83 & 50.56 \\
& & dev. & $+2.8$ & $+1.7$ & $-1.1$ & $+2.8$ & $+0.6$ \\
& & gain & $+0.8$ & $-3.4$ & $+2.8$ & $+10.7$ & $+28.5$ \\
\midrule
\multirow{4}{*}{\shortstack[l]{VSR\\horizontal}} & \multirow{4}{*}{50.01} & acc. & 53.87 & 52.87 & 54.70 & 55.45 & 61.62 \\
& & n.acc. & 53.21 & 52.79 & 52.96 & 52.96 & 52.96 \\
& & dev. & $+3.2$ & $+2.8$ & $+3.0$ & $+3.0$ & $+3.0$ \\
& & gain & $+0.7$ & $+0.1$ & $+1.7$ & $+2.5$ & $+8.7$ \\
\midrule
\multicolumn{8}{@{}l}{\emph{Prior-biased subsets}} \\
\multirow{4}{*}{\shortstack[l]{VG-Rel\\vertical}} & \multirow{4}{*}{50.02} & acc. & 55.87 & 80.13 & 43.13 & 93.63 & 93.75 \\
& & n.acc. & 47.38 & 69.87 & 27.12 & 84.00 & 86.50 \\
& & dev. & $-2.6$ & $+19.9$ & $-22.9$ & $+34.0$ & $+36.5$ \\
& & gain & $+8.5$ & $+10.3$ & $+16.0$ & $+9.6$ & $+7.3$ \\
\midrule
\multirow{4}{*}{\shortstack[l]{VG-Rel\\verbs}} & \multirow{4}{*}{49.98} & acc. & 67.65 & 92.89 & 50.25 & 99.27 & 98.53 \\
& & n.acc. & 65.20 & 84.80 & 30.15 & 98.04 & 96.09 \\
& & dev. & $+15.2$ & $+34.8$ & $-19.8$ & $+48.1$ & $+46.1$ \\
& & gain & $+2.5$ & $+8.1$ & $+20.1$ & $+1.2$ & $+2.4$ \\
\midrule
\multirow{4}{*}{\shortstack[l]{VALSE\\actant swap}} & \multirow{4}{*}{50.03} & acc. & 72.71 & 85.14 & 62.07 & 93.26 & 93.99 \\
& & n.acc. & 41.31 & 62.80 & 28.35 & 88.51 & 88.41 \\
& & dev. & $-8.7$ & $+12.8$ & $-21.7$ & $+38.5$ & $+38.4$ \\
& & gain & $+31.4$ & $+22.3$ & $+33.7$ & $+4.8$ & $+5.6$ \\
\bottomrule
\end{tabular*}
\end{table}

\subsection{Every Readout on the Five Balanced Sets}
\label{app:grid}
Tab.~\ref{tab:allreadouts} reports the complete comparison on the five direction-balanced subsets, including the additional frozen checkpoints evaluated with the two ADR variants. Alternative readouts are reported in Tab.~\ref{tab:alternatives}.

{\fontsize{7}{9}\selectfont\setlength{\tabcolsep}{0.2pt}
\setlength{\LTleft}{0pt}
\setlength{\LTright}{0pt}
\setlength{\LTcapwidth}{\textwidth}
\begin{longtable}{@{\extracolsep{\fill}}l@{\hspace{2pt}}*{15}{c}@{}}
\caption{Readouts on the five direction-balanced subsets. Each subset reports acc., n.acc., and gain as defined in Tab.~\ref{tab:main}. Accuracies are in percent and gains in percentage points. Grey rows show deployed scores, followed by ADR with four choices of alignment weights (App.~\ref{app:localizers}). VSR exceptions are marked and explained below the table. The random-guess row averages $10{,}000$ uniform draws.}
\label{tab:allreadouts}\\
\toprule
\multirow{2}{*}{Readout} & \multicolumn{3}{c}{VG-Rel L/R} & \multicolumn{3}{c}{COCO} & \multicolumn{3}{c}{WU-A} & \multicolumn{3}{c}{WU-B} & \multicolumn{3}{c}{VSR} \\
\cmidrule(lr){2-4}\cmidrule(lr){5-7}\cmidrule(lr){8-10}\cmidrule(lr){11-13}\cmidrule(lr){14-16}
 & acc. & n.acc. & gain & acc. & n.acc. & gain & acc. & n.acc. & gain & acc. & n.acc. & gain & acc. & n.acc. & gain \\
 & \multicolumn{3}{c}{$n{=}3918$} & \multicolumn{3}{c}{$n{=}404$} & \multicolumn{3}{c}{$n{=}408$} & \multicolumn{3}{c}{$n{=}354$} & \multicolumn{3}{c}{$n{=}1201$} \\
random & 50.01 & 50.01 & 0.0 & 49.99 & 49.99 & 0.0 & 25.02 & 25.02 & 0.0 & 50.01 & 50.01 & 0.0 & 50.01 & 50.01 & 0.0 \\
\midrule
\endfirsthead
\toprule
\multirow{2}{*}{Readout} & \multicolumn{3}{c}{VG-Rel L/R} & \multicolumn{3}{c}{COCO} & \multicolumn{3}{c}{WU-A} & \multicolumn{3}{c}{WU-B} & \multicolumn{3}{c}{VSR} \\
\cmidrule(lr){2-4}\cmidrule(lr){5-7}\cmidrule(lr){8-10}\cmidrule(lr){11-13}\cmidrule(lr){14-16}
 & acc. & n.acc. & gain & acc. & n.acc. & gain & acc. & n.acc. & gain & acc. & n.acc. & gain & acc. & n.acc. & gain \\
\midrule
\endhead
\bottomrule
\endfoot
\bottomrule
\endlastfoot
\multicolumn{16}{@{}l}{\emph{frozen dual encoders, deployed score and ADR}} \\
\textcolor[gray]{0.45}{CLIP ViT-B/16} & \textcolor[gray]{0.45}{49.64} & \textcolor[gray]{0.45}{49.69} & \textcolor[gray]{0.45}{$-0.1$} & \textcolor[gray]{0.45}{45.30} & \textcolor[gray]{0.45}{47.28} & \textcolor[gray]{0.45}{$-2.0$} & \textcolor[gray]{0.45}{26.96} & \textcolor[gray]{0.45}{25.98} & \textcolor[gray]{0.45}{$+1.0$} & \textcolor[gray]{0.45}{53.67} & \textcolor[gray]{0.45}{52.83} & \textcolor[gray]{0.45}{$+0.8$} & \textcolor[gray]{0.45}{53.87} & \textcolor[gray]{0.45}{53.21} & \textcolor[gray]{0.45}{$+0.7$} \\*
\hspace{0.3em}\dirpix{} & 90.45 & 49.03 & $+41.4$ & 92.33 & 51.73 & $+40.6$ & 31.62 & 25.74 & $+5.9$ & 93.22 & 52.26 & $+41.0$ & 73.86 & 47.79 & $+26.1$ \\*
\hspace{0.3em}\dirdense{} & 90.22 & 49.44 & $+40.8$ & 94.06 & 52.72 & $+41.3$ & 41.18 & 25.25 & $+15.9$ & 98.02 & 50.00 & $+48.0$ & 74.94 & 47.88 & $+27.1$ \\*
\hspace{0.3em}\dirpi{} & 90.07 & 47.98 & $+42.1$ & 90.84 & 52.23 & $+38.6$ & 29.66 & 25.00 & $+4.7$ & 91.24 & 51.98 & $+39.3$ & 73.11 & 46.96 & $+26.2$ \\*
\hspace{0.3em}\dirargmax{} & 74.82 & 50.75 & $+24.1$ & 84.65 & 46.66 & $+38.0$ & 52.45 & 25.98 & $+26.5$ & 95.90 & 46.75 & $+49.2$ & 69.15 & 51.08 & $+18.1$ \\
\textcolor[gray]{0.45}{CLIP ViT-B/32} & \textcolor[gray]{0.45}{49.31} & \textcolor[gray]{0.45}{49.18} & \textcolor[gray]{0.45}{$+0.1$} & \textcolor[gray]{0.45}{50.50} & \textcolor[gray]{0.45}{49.26} & \textcolor[gray]{0.45}{$+1.2$} & \textcolor[gray]{0.45}{30.64} & \textcolor[gray]{0.45}{28.19} & \textcolor[gray]{0.45}{$+2.5$} & \textcolor[gray]{0.45}{48.59} & \textcolor[gray]{0.45}{52.26} & \textcolor[gray]{0.45}{$-3.7$} & \textcolor[gray]{0.45}{53.62} & \textcolor[gray]{0.45}{51.54} & \textcolor[gray]{0.45}{$+2.1$} \\*
\hspace{0.3em}\dirpix{} & 88.31 & 48.88 & $+39.4$ & 92.08 & 52.97 & $+39.1$ & 27.45 & 24.51 & $+2.9$ & 93.22 & 49.72 & $+43.5$ & 74.35 & 50.21 & $+24.1$ \\*
\hspace{0.3em}\dirdense{} & 88.51 & 48.75 & $+39.8$ & 93.07 & 54.95 & $+38.1$ & 32.35 & 25.00 & $+7.4$ & 96.61 & 48.31 & $+48.3$ & 74.35 & 48.38 & $+26.0$ \\*
\hspace{0.3em}\dirpi{} & 88.00 & 49.23 & $+38.8$ & 90.59 & 54.70 & $+35.9$ & 26.72 & 25.25 & $+1.5$ & 91.81 & 50.56 & $+41.3$ & 73.52 & 49.46 & $+24.1$ \\*
\hspace{0.3em}\dirargmax{} & 75.66 & 49.97 & $+25.7$ & 81.93 & 54.58 & $+27.4$ & 48.53 & 23.53 & $+25.0$ & 94.35 & 47.46 & $+46.9$ & 66.86 & 51.46 & $+15.4$ \\
\textcolor[gray]{0.45}{CLIP ViT-L/14} & \textcolor[gray]{0.45}{50.54} & \textcolor[gray]{0.45}{49.62} & \textcolor[gray]{0.45}{$+0.9$} & \textcolor[gray]{0.45}{50.25} & \textcolor[gray]{0.45}{49.75} & \textcolor[gray]{0.45}{$+0.5$} & \textcolor[gray]{0.45}{26.47} & \textcolor[gray]{0.45}{24.75} & \textcolor[gray]{0.45}{$+1.7$} & \textcolor[gray]{0.45}{51.98} & \textcolor[gray]{0.45}{51.98} & \textcolor[gray]{0.45}{$0.0$} & \textcolor[gray]{0.45}{52.71} & \textcolor[gray]{0.45}{53.71} & \textcolor[gray]{0.45}{$-1.0$} \\*
\hspace{0.3em}\dirpix{} & 90.61 & 50.03 & $+40.6$ & 95.05 & 52.48 & $+42.6$ & 32.35 & 22.55 & $+9.8$ & 100.00 & 49.72 & $+50.3$ & 76.02 & 49.13 & $+26.9$ \\*
\hspace{0.3em}\dirdense{} & 90.99 & 50.10 & $+40.9$ & 95.30 & 52.97 & $+42.3$ & 58.09 & 22.30 & $+35.8$ & 100.00 & 48.87 & $+51.1$ & 76.27 & 49.04 & $+27.2$ \\*
\hspace{0.3em}\dirpi{} & 90.76 & 49.82 & $+40.9$ & 93.56 & 53.22 & $+40.3$ & 30.15 & 21.57 & $+8.6$ & 99.72 & 48.87 & $+50.9$ & 75.35 & 49.38 & $+26.0$ \\*
\hspace{0.3em}\dirargmax{} & 81.55 & 50.24 & $+31.3$ & 88.12 & 52.72 & $+35.4$ & 50.98 & 20.16 & $+30.8$ & 99.58 & 46.47 & $+53.1$ & 71.48 & 51.21 & $+20.3$ \\
\textcolor[gray]{0.45}{LAION-2B ViT-B/16} & \textcolor[gray]{0.45}{49.13} & \textcolor[gray]{0.45}{49.77} & \textcolor[gray]{0.45}{$-0.6$} & \textcolor[gray]{0.45}{43.07} & \textcolor[gray]{0.45}{47.28} & \textcolor[gray]{0.45}{$-4.2$} & \textcolor[gray]{0.45}{27.70} & \textcolor[gray]{0.45}{26.23} & \textcolor[gray]{0.45}{$+1.5$} & \textcolor[gray]{0.45}{50.00} & \textcolor[gray]{0.45}{50.28} & \textcolor[gray]{0.45}{$-0.3$} & \textcolor[gray]{0.45}{51.96} & \textcolor[gray]{0.45}{52.96} & \textcolor[gray]{0.45}{$-1.0$} \\*
\hspace{0.3em}\dirpix{} & 89.05 & 50.23 & $+38.8$ & 91.83 & 51.73 & $+40.1$ & 31.86 & 23.28 & $+8.6$ & 97.74 & 52.26 & $+45.5$ & 74.19 & 48.96 & $+25.2$ \\*
\hspace{0.3em}\dirdense{} & 88.41 & 50.13 & $+38.3$ & 93.56 & 51.24 & $+42.3$ & 38.73 & 25.25 & $+13.5$ & 99.15 & 52.54 & $+46.6$ & 74.94 & 50.04 & $+24.9$ \\*
\hspace{0.3em}\dirpi{} & 88.59 & 50.05 & $+38.5$ & 90.84 & 51.49 & $+39.4$ & 31.62 & 23.53 & $+8.1$ & 96.04 & 50.57 & $+45.5$ & 74.10 & 49.29 & $+24.8$ \\*
\hspace{0.3em}\dirargmax{} & 73.43 & 50.86 & $+22.6$ & 81.56 & 45.30 & $+36.3$ & 43.32 & 25.86 & $+17.5$ & 94.07 & 47.03 & $+47.0$ & 67.57 & 50.92 & $+16.7$ \\
\textcolor[gray]{0.45}{DFN5B ViT-H/14} & \textcolor[gray]{0.45}{50.46} & \textcolor[gray]{0.45}{50.89} & \textcolor[gray]{0.45}{$-0.4$} & \textcolor[gray]{0.45}{44.80} & \textcolor[gray]{0.45}{48.51} & \textcolor[gray]{0.45}{$-3.7$} & \textcolor[gray]{0.45}{28.92} & \textcolor[gray]{0.45}{27.21} & \textcolor[gray]{0.45}{$+1.7$} & \textcolor[gray]{0.45}{53.11} & \textcolor[gray]{0.45}{51.69} & \textcolor[gray]{0.45}{$+1.4$} & \textcolor[gray]{0.45}{54.04} & \textcolor[gray]{0.45}{52.21} & \textcolor[gray]{0.45}{$+1.8$} \\*
\hspace{0.3em}\dirpix{} & 91.22 & 47.93 & $+43.3$ & 97.28 & 50.99 & $+46.3$ & 26.96 & 25.49 & $+1.5$ & 98.59 & 51.13 & $+47.5$ & 75.85 & 48.46 & $+27.4$ \\*
\hspace{0.3em}\dirdense{} & 91.88 & 48.95 & $+42.9$ & 98.27 & 51.24 & $+47.0$ & 63.24 & 24.75 & $+38.5$ & 100.00 & 51.69 & $+48.3$ & 77.10 & 46.96 & $+30.1$ \\*
\hspace{0.3em}\dirpi{} & 90.91 & 47.52 & $+43.4$ & 95.30 & 50.50 & $+44.8$ & 25.00 & 25.74 & $-0.7$ & 97.18 & 51.98 & $+45.2$ & 74.60 & 46.96 & $+27.6$ \\*
\hspace{0.3em}\dirargmax{} & 83.59 & 51.10 & $+32.5$ & 93.56 & 54.21 & $+39.4$ & 54.29 & 25.49 & $+28.8$ & 97.18 & 48.30 & $+48.9$ & 73.90 & 50.12 & $+23.8$ \\
\textcolor[gray]{0.45}{MetaCLIP ViT-H/14} & \textcolor[gray]{0.45}{50.33} & \textcolor[gray]{0.45}{48.72} & \textcolor[gray]{0.45}{$+1.6$} & \textcolor[gray]{0.45}{48.51} & \textcolor[gray]{0.45}{49.75} & \textcolor[gray]{0.45}{$-1.2$} & \textcolor[gray]{0.45}{27.70} & \textcolor[gray]{0.45}{23.28} & \textcolor[gray]{0.45}{$+4.4$} & \textcolor[gray]{0.45}{51.41} & \textcolor[gray]{0.45}{49.44} & \textcolor[gray]{0.45}{$+2.0$} & \textcolor[gray]{0.45}{54.62} & \textcolor[gray]{0.45}{53.96} & \textcolor[gray]{0.45}{$+0.7$} \\*
\hspace{0.3em}\dirpix{} & 90.76 & 49.34 & $+41.4$ & 92.57 & 48.51 & $+44.1$ & 26.72 & 25.25 & $+1.5$ & 93.50 & 46.33 & $+47.2$ & 73.86 & 48.88 & $+25.0$ \\*
\hspace{0.3em}\dirdense{} & 91.04 & 49.62 & $+41.4$ & 94.80 & 48.02 & $+46.8$ & 52.70 & 25.00 & $+27.7$ & 100.00 & 51.42 & $+48.6$ & 74.85 & 49.71 & $+25.1$ \\*
\hspace{0.3em}\dirpi{} & 90.28 & 49.67 & $+40.6$ & 91.58 & 49.26 & $+42.3$ & 25.00 & 25.49 & $-0.5$ & 78.25 & 47.18 & $+31.1$ & 72.86 & 49.46 & $+23.4$ \\*
\hspace{0.3em}\dirargmax{} & 79.43 & 49.95 & $+29.5$ & 85.40 & 50.99 & $+34.4$ & 46.45 & 24.51 & $+21.9$ & 96.04 & 48.02 & $+48.0$ & 69.48 & 49.29 & $+20.2$ \\
\textcolor[gray]{0.45}{SigLIP ViT-B/16} & \textcolor[gray]{0.45}{50.33} & \textcolor[gray]{0.45}{49.31} & \textcolor[gray]{0.45}{$+1.0$} & \textcolor[gray]{0.45}{48.51} & \textcolor[gray]{0.45}{45.79} & \textcolor[gray]{0.45}{$+2.7$} & \textcolor[gray]{0.45}{27.94} & \textcolor[gray]{0.45}{23.77} & \textcolor[gray]{0.45}{$+4.2$} & \textcolor[gray]{0.45}{49.15} & \textcolor[gray]{0.45}{47.74} & \textcolor[gray]{0.45}{$+1.4$} & \textcolor[gray]{0.45}{54.70} & \textcolor[gray]{0.45}{52.29} & \textcolor[gray]{0.45}{$+2.4$} \\*
\hspace{0.3em}\dirpix{} & 90.02 & 49.62 & $+40.4$ & 89.36 & 47.52 & $+41.8$ & 54.17 & 23.28 & $+30.9$ & 100.00 & 50.56 & $+49.4$ & 74.27 & 49.54 & $+24.7$ \\*
\hspace{0.3em}\dirdense{} & 90.28 & 49.49 & $+40.8$ & 91.58 & 49.75 & $+41.8$ & 56.86 & 21.32 & $+35.5$ & 100.00 & 52.83 & $+47.2$ & 74.19 & 49.54 & $+24.7$ \\*
\hspace{0.3em}\dirpi{} & 89.84 & 49.46 & $+40.4$ & 88.12 & 49.26 & $+38.9$ & 47.79 & 22.06 & $+25.7$ & 100.00 & 50.85 & $+49.2$ & 73.44 & 47.71 & $+25.7$ \\*
\hspace{0.3em}\dirargmax{} & 79.81 & 49.44 & $+30.4$ & 81.81 & 53.34 & $+28.5$ & 53.68 & 27.14 & $+26.5$ & 90.26 & 49.30 & $+41.0$ & 70.77 & 50.58 & $+20.2$ \\
\textcolor[gray]{0.45}{SigLIP-So400m} & \textcolor[gray]{0.45}{50.79} & \textcolor[gray]{0.45}{51.23} & \textcolor[gray]{0.45}{$-0.4$} & \textcolor[gray]{0.45}{49.01} & \textcolor[gray]{0.45}{53.96} & \textcolor[gray]{0.45}{$-5.0$} & \textcolor[gray]{0.45}{27.21} & \textcolor[gray]{0.45}{22.79} & \textcolor[gray]{0.45}{$+4.4$} & \textcolor[gray]{0.45}{51.98} & \textcolor[gray]{0.45}{54.80} & \textcolor[gray]{0.45}{$-2.8$} & \textcolor[gray]{0.45}{55.04} & \textcolor[gray]{0.45}{53.12} & \textcolor[gray]{0.45}{$+1.9$} \\*
\hspace{0.3em}\dirpix{} & 91.50 & 50.26 & $+41.2$ & 94.55 & 48.76 & $+45.8$ & 66.67 & 27.21 & $+39.5$ & 100.00 & 55.37 & $+44.6$ & 76.44 & 47.88 & $+28.6$ \\*
\hspace{0.3em}\dirdense{} & 91.83 & 50.66 & $+41.2$ & 97.28 & 50.25 & $+47.0$ & 71.81 & 29.66 & $+42.2$ & 100.00 & 53.11 & $+46.9$ & 77.27 & 47.46 & $+29.8$ \\*
\hspace{0.3em}\dirpi{} & 91.30 & 50.84 & $+40.5$ & 94.55 & 49.50 & $+45.1$ & 58.09 & 28.43 & $+29.7$ & 100.00 & 53.95 & $+46.1$ & 76.44 & 47.54 & $+28.9$ \\*
\hspace{0.3em}\dirargmax{} & 82.50 & 50.13 & $+32.4$ & 87.13 & 51.36 & $+35.8$ & 49.75 & 24.02 & $+25.7$ & 96.05 & 51.41 & $+44.6$ & 72.31 & 48.96 & $+23.4$ \\
\textcolor[gray]{0.45}{SigLIP2-So400m} & \textcolor[gray]{0.45}{49.95} & \textcolor[gray]{0.45}{50.54} & \textcolor[gray]{0.45}{$-0.6$} & \textcolor[gray]{0.45}{49.01} & \textcolor[gray]{0.45}{45.05} & \textcolor[gray]{0.45}{$+4.0$} & \textcolor[gray]{0.45}{29.17} & \textcolor[gray]{0.45}{24.75} & \textcolor[gray]{0.45}{$+4.4$} & \textcolor[gray]{0.45}{51.69} & \textcolor[gray]{0.45}{48.87} & \textcolor[gray]{0.45}{$+2.8$} & \textcolor[gray]{0.45}{54.70} & \textcolor[gray]{0.45}{52.96} & \textcolor[gray]{0.45}{$+1.7$} \\*
\hspace{0.3em}\dirpix{} & 92.09 & 49.46 & $+42.6$ & 96.53 & 52.48 & $+44.1$ & 74.26 & 26.96 & $+47.3$ & 100.00 & 50.85 & $+49.2$ & 76.10 & 48.96 & $+27.1$ \\*
\hspace{0.3em}\dirdense{} & 92.32 & 50.36 & $+42.0$ & 97.52 & 51.24 & $+46.3$ & 80.39 & 24.51 & $+55.9$ & 100.00 & 51.42 & $+48.6$ & 77.35 & 49.63 & $+27.7$ \\*
\hspace{0.3em}\dirpi{} & 91.68 & 50.43 & $+41.3$ & 94.55 & 50.50 & $+44.1$ & 72.30 & 24.75 & $+47.6$ & 100.00 & 51.41 & $+48.6$ & 75.94 & 49.88 & $+26.1$ \\*
\hspace{0.3em}\dirargmax{} & 81.46 & 48.90 & $+32.6$ & 88.49 & 49.38 & $+39.1$ & 43.75 & 24.14 & $+19.6$ & 93.64 & 49.15 & $+44.5$ & 71.61 & 50.58 & $+21.0$ \\
\textcolor[gray]{0.45}{BLIP ViT-B/16, ITC} & \textcolor[gray]{0.45}{49.31} & \textcolor[gray]{0.45}{49.00} & \textcolor[gray]{0.45}{$+0.3$} & \textcolor[gray]{0.45}{45.79} & \textcolor[gray]{0.45}{47.52} & \textcolor[gray]{0.45}{$-1.7$} & \textcolor[gray]{0.45}{27.70} & \textcolor[gray]{0.45}{25.74} & \textcolor[gray]{0.45}{$+2.0$} & \textcolor[gray]{0.45}{52.82} & \textcolor[gray]{0.45}{49.43} & \textcolor[gray]{0.45}{$+3.4$} & \textcolor[gray]{0.45}{52.96} & \textcolor[gray]{0.45}{52.21} & \textcolor[gray]{0.45}{$+0.8$} \\*
\hspace{0.3em}\dirpix{} & 89.51 & 48.47 & $+41.0$ & 90.59 & 52.72 & $+37.9$ & 27.70 & 27.94 & $-0.2$ & 96.61 & 51.13 & $+45.5$ & 72.94 & 49.63 & $+23.3$ \\*
\hspace{0.3em}\dirdense{} & 88.97 & 49.11 & $+39.9$ & 95.79 & 51.24 & $+44.6$ & 41.91 & 25.00 & $+16.9$ & 100.00 & 50.56 & $+49.4$ & 75.69 & 48.79 & $+26.9$ \\*
\hspace{0.3em}\dirpi{} & 89.13 & 48.65 & $+40.5$ & 90.84 & 50.50 & $+40.3$ & 27.45 & 28.92 & $-1.5$ & 96.61 & 50.00 & $+46.6$ & 73.27 & 49.38 & $+23.9$ \\*
\hspace{0.3em}\dirargmax{} & 80.45 & 50.68 & $+29.8$ & 89.48 & 53.59 & $+35.9$ & 35.85 & 29.53 & $+6.3$ & 99.44 & 50.28 & $+49.2$ & 70.94 & 48.13 & $+22.8$ \\
\textcolor[gray]{0.45}{NegCLIP} & \textcolor[gray]{0.45}{50.26} & \textcolor[gray]{0.45}{49.67} & \textcolor[gray]{0.45}{$+0.6$} & \textcolor[gray]{0.45}{44.80} & \textcolor[gray]{0.45}{49.50} & \textcolor[gray]{0.45}{$-4.7$} & \textcolor[gray]{0.45}{33.33} & \textcolor[gray]{0.45}{26.23} & \textcolor[gray]{0.45}{$+7.1$} & \textcolor[gray]{0.45}{48.31} & \textcolor[gray]{0.45}{51.69} & \textcolor[gray]{0.45}{$-3.4$} & \textcolor[gray]{0.45}{52.87} & \textcolor[gray]{0.45}{52.79} & \textcolor[gray]{0.45}{$+0.1$} \\*
\hspace{0.3em}\dirpix{} & 89.13 & 48.65 & $+40.5$ & 91.83 & 53.71 & $+38.1$ & 28.68 & 24.26 & $+4.4$ & 93.79 & 47.18 & $+46.6$ & 73.94 & 49.54 & $+24.4$ \\*
\hspace{0.3em}\dirdense{} & 89.13 & 49.18 & $+40.0$ & 93.07 & 52.97 & $+40.1$ & 33.82 & 25.00 & $+8.8$ & 96.89 & 49.15 & $+47.7$ & 74.10 & 49.46 & $+24.6$ \\*
\hspace{0.3em}\dirpi{} & 89.25 & 48.55 & $+40.7$ & 91.34 & 54.70 & $+36.6$ & 27.21 & 25.00 & $+2.2$ & 92.09 & 48.58 & $+43.5$ & 73.61 & 49.88 & $+23.7$ \\*
\hspace{0.3em}\dirargmax{} & 79.42 & 49.77 & $+29.7$ & 85.15 & 54.21 & $+30.9$ & 50.55 & 25.98 & $+24.6$ & 96.33 & 44.49 & $+51.8$ & 68.57 & 51.79 & $+16.8$ \\
\textcolor[gray]{0.45}{CLoVe} & \textcolor[gray]{0.45}{48.83} & \textcolor[gray]{0.45}{50.05} & \textcolor[gray]{0.45}{$-1.2$} & \textcolor[gray]{0.45}{51.49} & \textcolor[gray]{0.45}{50.99} & \textcolor[gray]{0.45}{$+0.5$} & \textcolor[gray]{0.45}{34.56} & \textcolor[gray]{0.45}{23.77} & \textcolor[gray]{0.45}{$+10.8$} & \textcolor[gray]{0.45}{48.59} & \textcolor[gray]{0.45}{47.74} & \textcolor[gray]{0.45}{$+0.9$} & \textcolor[gray]{0.45}{50.96} & \textcolor[gray]{0.45}{51.79} & \textcolor[gray]{0.45}{$-0.8$} \\*
\hspace{0.3em}\dirpix{} & 88.57 & 48.29 & $+40.3$ & 90.84 & 55.45 & $+35.4$ & 30.64 & 25.25 & $+5.4$ & 95.20 & 51.41 & $+43.8$ & 74.69 & 48.96 & $+25.7$ \\*
\hspace{0.3em}\dirdense{} & 88.51 & 48.77 & $+39.7$ & 93.32 & 52.72 & $+40.6$ & 38.24 & 26.23 & $+12.0$ & 99.43 & 51.13 & $+48.3$ & 75.52 & 49.29 & $+26.2$ \\*
\hspace{0.3em}\dirpi{} & 88.51 & 49.21 & $+39.3$ & 89.85 & 54.21 & $+35.6$ & 29.41 & 25.74 & $+3.7$ & 93.79 & 51.69 & $+42.1$ & 75.27 & 49.38 & $+25.9$ \\*
\hspace{0.3em}\dirargmax{} & 78.41 & 48.52 & $+29.9$ & 87.00 & 48.64 & $+38.4$ & 33.88 & 25.86 & $+8.0$ & 98.17 & 52.54 & $+45.6$ & 71.94 & 48.33 & $+23.6$ \\
\textcolor[gray]{0.45}{CE-CLIP} & \textcolor[gray]{0.45}{51.30} & \textcolor[gray]{0.45}{50.94} & \textcolor[gray]{0.45}{$+0.4$} & \textcolor[gray]{0.45}{52.48} & \textcolor[gray]{0.45}{50.74} & \textcolor[gray]{0.45}{$+1.7$} & \textcolor[gray]{0.45}{31.13} & \textcolor[gray]{0.45}{27.45} & \textcolor[gray]{0.45}{$+3.7$} & \textcolor[gray]{0.45}{50.00} & \textcolor[gray]{0.45}{49.44} & \textcolor[gray]{0.45}{$+0.6$} & \textcolor[gray]{0.45}{53.29} & \textcolor[gray]{0.45}{52.71} & \textcolor[gray]{0.45}{$+0.6$} \\*
\hspace{0.3em}\dirpix{} & 90.20 & 48.21 & $+42.0$ & 92.82 & 55.69 & $+37.1$ & 30.15 & 25.00 & $+5.2$ & 92.09 & 48.87 & $+43.2$ & 74.85 & 49.46 & $+25.4$ \\*
\hspace{0.3em}\dirdense{} & 89.84 & 48.34 & $+41.5$ & 91.83 & 54.70 & $+37.1$ & 31.13 & 25.25 & $+5.9$ & 92.94 & 49.72 & $+43.2$ & 74.77 & 49.21 & $+25.6$ \\*
\hspace{0.3em}\dirpi{} & 89.92 & 48.70 & $+41.2$ & 90.84 & 55.45 & $+35.4$ & 28.92 & 24.75 & $+4.2$ & 90.39 & 49.44 & $+41.0$ & 74.69 & 49.29 & $+25.4$ \\*
\hspace{0.3em}\dirargmax{} & 76.75 & 49.08 & $+27.7$ & 87.00 & 53.71 & $+33.3$ & 49.75 & 24.08 & $+25.7$ & 87.85 & 47.17 & $+40.7$ & 71.32 & 49.54 & $+21.8$ \\
\textcolor[gray]{0.45}{direction-targeted recipe} & \textcolor[gray]{0.45}{50.38} & \textcolor[gray]{0.45}{48.93} & \textcolor[gray]{0.45}{$+1.5$} & \textcolor[gray]{0.45}{51.24} & \textcolor[gray]{0.45}{48.02} & \textcolor[gray]{0.45}{$+3.2$} & \textcolor[gray]{0.45}{31.86} & \textcolor[gray]{0.45}{25.49} & \textcolor[gray]{0.45}{$+6.4$} & \textcolor[gray]{0.45}{62.14} & \textcolor[gray]{0.45}{47.74} & \textcolor[gray]{0.45}{$+14.4$} & \textcolor[gray]{0.45}{53.96} & \textcolor[gray]{0.45}{52.37} & \textcolor[gray]{0.45}{$+1.6$} \\*
\hspace{0.3em}\dirpix{} & 86.73 & 48.42 & $+38.3$ & 88.86 & 53.71 & $+35.2$ & 26.72 & 26.23 & $+0.5$ & 78.24 & 47.74 & $+30.5$ & 71.86 & 50.29 & $+21.6$ \\*
\hspace{0.3em}\dirdense{} & 87.19 & 48.60 & $+38.6$ & 89.11 & 54.70 & $+34.4$ & 34.56 & 25.49 & $+9.1$ & 89.55 & 49.15 & $+40.4$ & 74.77 & 51.79 & $+23.0$ \\*
\hspace{0.3em}\dirpi{} & 86.83 & 48.77 & $+38.1$ & 87.87 & 53.96 & $+33.9$ & 25.98 & 25.74 & $+0.2$ & 76.27 & 48.58 & $+27.7$ & 72.61 & 50.46 & $+22.2$ \\*
\hspace{0.3em}\dirargmax{} & 79.02 & 47.52 & $+31.5$ & 82.92 & 53.09 & $+29.8$ & 48.41 & 23.90 & $+24.5$ & 87.71 & 49.30 & $+38.4$ & 72.02 & 51.67 & $+20.4$ \\
\midrule
\multicolumn{16}{@{}l}{\emph{matching heads and their contrastive scores}} \\
BLIP base, ITM & 51.02 & 50.79 & $+0.2$ & 55.94 & 48.27 & $+7.7$ & 35.78 & 26.23 & $+9.6$ & 55.37 & 49.15 & $+6.2$ & 54.29 & 50.96 & $+3.3$ \\
BLIP base, ITC & 49.31 & 49.00 & $+0.3$ & 45.79 & 47.52 & $-1.7$ & 27.70 & 25.74 & $+2.0$ & 52.82 & 49.43 & $+3.4$ & 53.02 & 52.04$^{\dagger}$ & $+1.0$ \\
BLIP large, ITM & 50.38 & 49.62 & $+0.8$ & 55.20 & 47.77 & $+7.4$ & 31.74 & 24.51 & $+7.2$ & 60.45 & 46.89 & $+13.6$ & 54.29 & 50.96 & $+3.3$ \\
BLIP large, ITC & 50.43 & 49.92 & $+0.5$ & 41.83 & 50.74 & $-8.9$ & 28.92 & 25.49 & $+3.4$ & 54.23 & 46.89 & $+7.3$ & 53.02 & 51.88$^{\dagger}$ & $+1.1$ \\
BLIP-2 ViT-g, ITM & 49.90 & 49.36 & $+0.5$ & 58.66 & 50.00 & $+8.7$ & 44.61 & 21.32 & $+23.3$ & 59.32 & 48.87 & $+10.5$ & 54.04 & 51.37 & $+2.7$ \\
BLIP-2 ViT-g, ITC & 49.62 & 49.46 & $+0.2$ & 47.77 & 49.50 & $-1.7$ & 32.11 & 25.98 & $+6.1$ & 51.42 & 49.15 & $+2.3$ & 53.59 & 51.14$^{\dagger}$ & $+2.5$ \\
FLAVA-full, ITM & 50.28 & 50.87 & $-0.6$ & 54.70 & 51.49 & $+3.2$ & 30.15 & 26.47 & $+3.7$ & 51.13 & 48.02 & $+3.1$ & 52.29 & 50.87 & $+1.4$ \\
FLAVA-full, ITC & 49.62 & 50.26 & $-0.6$ & 50.74 & 50.00 & $+0.7$ & 28.43 & 26.23 & $+2.2$ & 50.00 & 51.13 & $-1.1$ & 52.29 & 50.98$^{\dagger}$ & $+1.3$ \\
\midrule
\multicolumn{16}{@{}l}{\emph{input intervention}} \\
half-crop, CLIP & 83.67 & 49.16 & $+34.5$ & 82.92 & 55.45 & $+27.5$ & 44.61 & 22.30 & $+22.3$ & 100.00 & 50.00 & $+50.0$ & 60.95 & 52.96$^{\ddagger}$ & $+8.0$ \\
half-crop, SigLIP2 & 88.46 & 49.69 & $+38.8$ & 91.83 & 49.50 & $+42.3$ & 57.35 & 25.25 & $+32.1$ & 99.72 & 51.70 & $+48.0$ & 66.44 & 51.46$^{\ddagger}$ & $+15.0$ \\
\midrule
\multicolumn{16}{@{}l}{\emph{MLLM readouts}} \\
LLaVA-1.5-7B, likelihood & 65.80 & 49.44 & $+16.4$ & 79.21 & 54.21 & $+25.0$ & 30.39 & 24.26 & $+6.1$ & 63.56 & 52.83 & $+10.7$ & 55.45 & 52.96 & $+2.5$ \\
LLaVA-1.5-7B, chat & 65.74 & 50.10 & $+15.6$ & 65.59 & 50.74 & $+14.9$ & 44.00 & 21.45 & $+22.6$ & 65.11 & 51.41 & $+13.7$ & 63.70 & 50.04 & $+13.7$ \\
Qwen3-VL-8B, likelihood & 74.86 & 48.98 & $+25.9$ & 83.66 & 51.24 & $+32.4$ & 51.72 & 26.47 & $+25.3$ & 75.99 & 52.26 & $+23.7$ & 57.79 & 52.37 & $+5.4$ \\
Qwen3-VL-8B, chat & 87.93 & 49.69 & $+38.2$ & 88.37 & 49.57 & $+38.8$ & 99.42 & 25.89 & $+73.5$ & 100.00 & 49.22 & $+50.8$ & 83.76 & 50.21 & $+33.6$ \\
Qwen3.6-27B, likelihood & 74.76 & 49.29 & $+25.5$ & 88.12 & 49.01 & $+39.1$ & 67.89 & 24.51 & $+43.4$ & 79.09 & 50.56 & $+28.5$ & 61.62 & 52.96 & $+8.7$ \\
Qwen3.6-27B, chat & 89.74 & 49.07 & $+40.7$ & 96.16 & 51.11 & $+45.1$ & 99.39 & 32.97 & $+66.4$ & 100.00 & 50.00 & $+50.0$ & 88.01 & 49.71 & $+38.3$ \\
\end{longtable}
\noindent $^{\dagger}$These VSR cells use all $1224$ horizontal items. The other VSR cells use the $1201$ covered items. $^{\ddagger}$These half-crop VSR results use an oracle threshold on the raw caption score. Tab.~\ref{tab:alternatives} instead reports the sign of the score difference between a caption and its swap. Other scoring rules follow App.~\ref{app:scoring}.
}

\section{Theory: Statements and Proofs}
\label{app:proofs}

\subsection{Relational Score}
\label{app:relation-score}

The coupling entry $\pi_{ij}$ describes the alignment between patch $i$ and word $j$. For words $j$ and $j'$, we weight the image relation $A_{ii'}$ by $\pi_{ij}\pi_{i'j'}$, accounting for the alignment of both patches. Summing over patches gives
\[
(\pi^\top\bA\pi)_{jj'}
=
\sum_{i,i'}\pi_{ij}A_{ii'}\pi_{i'j'}.
\]
We combine these aligned image relations with the text relations using
\[
\sum_{j,j'}(\pi^\top\bA\pi)_{jj'}B_{j'j}
=
\operatorname{tr}(\pi^\top\bA\pi\bB).
\]
This is the relational score studied in the main text.

\subsection{Unary Blindness}
\label{app:unary-proof}

\begin{proof}[Proof of Theorem~\ref{thm:unary}]
Each word is encoded independently, so equal content-word multisets give equal feature multisets. A unary score receives only this multiset and the image. Its inputs are therefore identical for $C^+$ and $C^-$, which gives $F(I,C^+)=F(I,C^-)$ for every image and every choice of word features.
\end{proof}

\subsection{Symmetry Decomposition}
\label{app:decomposition}

\begin{proof}[Proof of Eq.~\ref{eq:decomposition}]
Expanding $\bA=\bA^s+\bA^a$ and $\bB=\bB^s+\bB^a$ gives four trace terms. For one mixed term, invariance of the trace under transposition and cyclic permutation gives
\begin{align*}
\operatorname{tr}(\pi^\top\bA^s\pi\bB^a)
 &=\operatorname{tr}\big((\bB^a)^\top\pi^\top(\bA^s)^\top\pi\big)\\
 &=-\operatorname{tr}(\bB^a\pi^\top\bA^s\pi)
 =-\operatorname{tr}(\pi^\top\bA^s\pi\bB^a).
\end{align*}
This term is zero. The same calculation with $\bA^a$ and $\bB^s$ removes the other mixed term. The two remaining terms give the stated decomposition for any fixed coupling.
\end{proof}

\subsection{Symmetric Blindness}
\label{app:symmetry-proof}

\begin{proof}[Proof of Theorem~\ref{thm:relational}]
If $\bA$ is symmetric, then $\bA^a=0$. Eq.~\ref{eq:decomposition} reduces the relational score to $\operatorname{tr}(\pi^\top\bA\pi\bB^s)$. Transposing $\bB$ preserves $\bB^s$, so it preserves the score. In Eq.~\ref{eq:oneshot}, fixing $\bS$ also fixes $\pi_0$ for the chosen $\varepsilon$, and the unary term is independent of $\bB$. The entire score is therefore invariant for any mixing weight $\alpha$.
\end{proof}

\subsection{Argument Swaps and Chance Accuracy}
\label{app:swap-proof}

\begin{proof}[Proof of Theorem~\ref{thm:swap}]
A parse-isomorphic swap preserves the word multiset, so equality for unary scores follows from Theorem~\ref{thm:unary}. For Eq.~\ref{eq:oneshot}, let $\bA$ be symmetric as in Theorem~\ref{thm:swap}. Let $\bP$ be the permutation matrix exchanging the two noun positions, and use superscripts $+$ and $-$ for quantities computed from $C^+$ and $C^-$. Context-free word encoding gives
\[
\bS^-=\bS^+\bP.
\]
The uniform word marginal makes $\pi\mapsto\pi\bP$ a bijection of the feasible couplings in Eq.~\ref{eq:ot}. Both the similarity term and the relative entropy are preserved when the columns of $\bS$ and $\pi$ are permuted together. Since $\varepsilon>0$ makes the objective strictly concave, its unique maximizer obeys
\[
\pi_0^-=\pi_0^+\bP.
\]
The assumption that the rows and columns of $\bB^s$ follow the noun permutation gives
\[
(\bB^-)^s=\bP^\top(\bB^+)^s\bP.
\]
Only these symmetric relations contribute when $\bA$ is symmetric, by Eq.~\ref{eq:decomposition}. Substituting the permutation identities into each term of Eq.~\ref{eq:oneshot}, and using $\bP\bP^\top=\bI$, gives
\begin{align*}
\langle\bS^-,\pi_0^-\rangle
 &=\operatorname{tr}\big(\bP^\top(\bS^+)^\top\pi_0^+\bP\big)
 =\langle\bS^+,\pi_0^+\rangle,\\
\operatorname{tr}\big((\pi_0^-)^\top\bA\pi_0^-(\bB^-)^s\big)
 &=\operatorname{tr}\big(\bP^\top(\pi_0^+)^\top\bA\pi_0^+(\bB^+)^s\bP\big)\\
 &=\operatorname{tr}\big((\pi_0^+)^\top\bA\pi_0^+(\bB^+)^s\big).
\end{align*}
Thus $F(I,C^+)=F(I,C^-)$ for every such item. Assigning a tie the value $1/2$ yields exactly chance accuracy on the binary task.
\end{proof}

\subsection{Centroid Displacement Identity}
\label{app:centroid-proof}

\begin{proof}[Proof of Proposition~\ref{prop:centroid}]
By definition, $\sum_i\pi_{ij}=\mu_j$ and $\sum_i\pi_{ij}\bp_i=\mu_j\bc_j$. The word-relation matrix has only two nonzero entries, $B^a_{os}=1$ and $B^a_{so}=-1$. Expanding the trace therefore gives
\begin{align*}
\operatorname{tr}(\pi^\top\bA^a\pi\bB^a)
 &=\sum_{i,i'}\pi_{is}\pi_{i'o}\,\be^\top(\bp_i-\bp_{i'})
   -\sum_{i,i'}\pi_{io}\pi_{i's}\,\be^\top(\bp_i-\bp_{i'})\\
 &=\mu_s\mu_o\,\be^\top(\bc_s-\bc_o)
   -\mu_o\mu_s\,\be^\top(\bc_o-\bc_s)\\
 &=2\mu_s\mu_o\,\be^\top(\bc_s-\bc_o).
\end{align*}
Each double sum factors into one column mass and one weighted position sum. Positive subject and object masses ensure that both centroids are defined and that the factor $2\mu_s\mu_o$ preserves the sign of their projected displacement.
\end{proof}

\subsection{The Zero Threshold}
\label{app:zero}

The additive gauge $\bB\mapsto \bB+\gamma\,\mathbf{1}\mathbf{1}^{\top}$ shifts the relational term of a symmetric-kernel score by $\gamma\,\mathbf{a}^{\top}\bA\,\mathbf{a}$ for every $\pi\in U(\mathbf{a},\mathbf{b})$, independent of the caption, so the symmetric channel has no canonical threshold. The constant matrix is symmetric, so by Eq.~\ref{eq:decomposition} the antisymmetric channel is unchanged, while reversal $\bB\mapsto \bB^{\top}$ negates it, and zero is the threshold fixed by the functional itself. Under an argument swap of a one-edge caption, which negates the edge, the pair decision $D(C^+)-D(C^-)$ has the sign of $D(C^+)$. Under a uniformly random permutation of the patch positions every centroid has the same expectation, the mean patch center, so $D$ has mean zero for every coupling.

\section{Reliability, Localization and Computational Cost}
\label{app:diagnostics}

\subsection{Risk and Coverage}
\label{app:measurement}
AUROC is the probability that a correct item carries a larger absolute margin than a wrong one, with exact score ties left out and confidence ties counted as half.

Tab.~\ref{tab:riskcov} reports the four binary direction-balanced sets and their pooled results for Fig.~\ref{fig:measurement}. For two-caption items, confidence is the absolute difference between the candidate scores. For VSR, it is $|D|$ for ADR and the absolute distance from the oracle threshold for the deployed score and likelihood readout. Pooling ranks these values without rescaling across datasets; it does not calibrate them as probabilities of correctness. The per-caption COCO scores come from a recompute of the published loop in the isolation directory whose decisions match the published file on every covered item in both passes.

\begin{table}[t]
\caption{Risk and coverage on the four binary direction-balanced sets. The pooled results use only the four direction-balanced sets ($5877$ items). The dual-encoder readouts use CLIP ViT-B/16, and the likelihood readout uses Qwen3.6-27B. (a) AUROC of the absolute margin as a ranking of correct above wrong items, with the same statistic under the derangement null in parentheses. (b) Accuracy of the most confident $30\%$ of the items. Pooling ranks the items of the four direction-balanced sets by one margin, so it is the accuracy a single threshold would obtain. The four-way What'sUp-A is excluded from this binary-set analysis.}
\label{tab:riskcov}
\centering\fontsize{7}{9}\selectfont\setlength{\tabcolsep}{0.75pt}
\begin{tabular*}{\columnwidth}{@{\extracolsep{\fill}}l@{\hspace{5pt}}ccccc@{}}
\toprule
readout & VG-Rel L/R & COCO & WU-B & VSR & pooled \\
$n$ & $3918$ & $404$ & $354$ & $1201$ & $5877$ \\
\midrule
\multicolumn{6}{@{}l}{\emph{(a) AUROC (derangement null)}} \\
\dirpix{} & 0.891 (0.510) & 0.904 (0.516) & 0.888 (0.449) & 0.713 (0.484) & 0.863 (0.504) \\
\dirdense{} & 0.880 (0.509) & 0.899 (0.483) & 0.970 (0.506) & 0.694 (0.478) & 0.844 (0.504) \\
deployed cosine & 0.509 (0.505) & 0.461 (0.461) & 0.512 (0.481) & 0.504 (0.487) & 0.514 (0.504) \\
MLLM likelihood, Qwen3.6-27B & 0.717 (0.496) & 0.726 (0.555) & 0.829 (0.508) & 0.568 (0.461) & 0.661 (0.498) \\
\midrule
\multicolumn{6}{@{}l}{\emph{(b) accuracy at $30\%$ coverage}} \\
\dirpix{} & 1.000 & 1.000 & 0.991 & 0.914 & 0.998 \\
\dirdense{} & 0.999 & 1.000 & 1.000 & 0.903 & 0.997 \\
deployed cosine & 0.495 & 0.446 & 0.575 & 0.547 & 0.516 \\
MLLM likelihood, Qwen3.6-27B & 0.916 & 0.967 & 0.972 & 0.689 & 0.849 \\
\bottomrule
\end{tabular*}
\end{table}

\paragraph{Example selection.}
The correct examples in Fig.~\ref{fig:measurement}(a) were selected for having the largest ADR margins among eligible traces. Eligibility requires ADR to be correct and the global score wrong, with both arguments parsed as scene nouns. For COCO examples, both centroids must fall inside their annotated object boxes; depth examples require the signed statistic \texttt{vdy\_pix} to exceed $0.5$.

\subsection{What'sUp-A Read by Two Encoders}
\label{app:wualoc}
Tab.~\ref{tab:wualoc} compares localization failures of \dirpix{} on CLIP ViT-B/16 and SigLIP2-So400m using the full What'sUp-A set. This noun-level diagnostic includes items excluded from the covered subset used in Tab.~\ref{tab:main}; its parsing checks and failure counts should not be read as a decomposition of the main-table errors. A failure is identified when the displacement between the localized centroids contradicts the annotated relation. SigLIP2 has $36$ such failures, compared with $147$ for CLIP, with fewer failures across most listed nouns. For cup, plate, cap and wineglass, SigLIP2 has no identified failures while CLIP does. This reduction is consistent with the stronger ADR performance of SigLIP2 on What'sUp-A in Tab.~\ref{tab:main}.

\begin{table}[t]
\caption{Relative-position failures grouped by subject noun on What'sUp-A, with both encoders evaluated on the same items. Rows show the thirteen nouns with at least twelve items, ordered by the ViT-B/16 failure count; the remaining forty-two are pooled. Counts indicate failures of the estimated relative positions, without identifying which object is mislocalized. SigLIP2 has no such failures for 29 of the fifty-five subject-noun groups.}
\label{tab:wualoc}
\centering\small
\begin{tabular*}{\columnwidth}{@{\extracolsep{\fill}}lccc@{}}
\toprule
subject noun & items & ViT-B/16 & SigLIP2 \\
\midrule
bottle & 28 & 8 & 2 \\
bowl & 12 & 6 & 3 \\
cup & 12 & 6 & 0 \\
orange & 12 & 5 & 2 \\
pillow & 12 & 5 & 2 \\
plate & 12 & 5 & 0 \\
sunglasses & 12 & 5 & 1 \\
cap & 12 & 4 & 0 \\
mug & 12 & 4 & 1 \\
yarn & 12 & 4 & 1 \\
book & 12 & 3 & 1 \\
scarf & 12 & 2 & 2 \\
wineglass & 12 & 2 & 0 \\
\midrule
the other forty-two nouns & 240 & 88 & 21 \\
all & 412 & 147 & 36 \\
\bottomrule
\end{tabular*}
\end{table}

\subsection{VSR and Its Reference Frames}
\label{app:vsrmain}
Tab.~\ref{tab:vsr} reports results on the $1201$ covered horizontal items from VSR~\citep{liu2023visual}. With CLIP ViT-B/16, \dirpix{} achieves an accuracy of $0.7386$ and a grounded gain of $+26.1$ percentage points. The global score reaches $0.5387$ even with an oracle threshold.

The reference-frame annotations in VSR help explain the remaining errors. On true statements, ADR agrees with relations defined from the viewer's perspective at a rate of $0.9290$, but with those defined from the object's perspective at only $0.2239$. ADR measures displacement in image coordinates and therefore follows the viewer's perspective. This mismatch with object-centered annotations explains some of its errors on VSR.

\begin{table}[h]
\caption{VSR covered horizontal subset ($n{=}1201$, whole images, third-party labels) and its decomposition by the benchmark's reference-frame annotation. Frame rows are true-item consistency rates, the false-statement row is the shared specificity, and balanced accuracy pairs each frame's consistency with that specificity.}
\label{tab:vsr}
\centering\small
\begin{tabular}{@{}lccc@{}}
\toprule
 & CLIP B/16 & null & NegCLIP \\
\midrule
\dirpix{}, accuracy & 0.7386 \ci{0.714}{0.763} & 0.4779 & 0.7394 \\
global, oracle threshold & 0.5387 & 0.5321 & 0.5287 \\
\midrule
viewer-frame consistency ($n{=}507$) & 0.9290 \ci{0.905}{0.951} & 0.5069 & 0.9172 \\
object-frame consistency ($n{=}67$) & 0.2239 \ci{0.134}{0.328} & 0.4776 & 0.2239 \\
both frames ($n{=}30$) & 0.8667 & 0.3333 & 0.9667 \\
specificity on false statements ($n{=}597$) & 0.6281 & 0.4606 & 0.6348 \\
balanced accuracy, viewer / object & 0.7786 / 0.4260 & 0.4838 / 0.4691 & 0.7760 / 0.4294 \\
\bottomrule
\end{tabular}
\end{table}

\subsection{Temperature Sweep}
\label{app:epssweep}
Tab.~\ref{tab:eps} compares both ADR variants at four temperatures from $0.02$ to $0.2$ using the same frozen CLIP ViT-B/16 features. Our default is $\varepsilon=0.05$. Across this range, accuracy on VG-Relation left/right varies by $0.1$ percentage points for \dirpix{} and $1.6$ for \dirdense{}. On What'sUp-A, the corresponding ranges are $2.9$ and $28.7$ points. For \dirdense{}, accuracy on What'sUp-A falls from $0.6078$ at $\varepsilon=0.02$ to $0.3211$ at $0.2$. These results show that \dirpix{} is less sensitive to temperature, although \dirdense{} benefits more from concentrated patch weights on What'sUp-A.

\begin{table}[t]
\caption{Temperature sensitivity on frozen CLIP ViT-B/16 features. Each variant reports acc., n.acc., and gain as defined in Tab.~\ref{tab:main}. Accuracies are in percent and gains in percentage points. The default temperature is $\varepsilon=0.05$.}
\label{tab:eps}
\centering\fontsize{8}{9.5}\selectfont\setlength{\tabcolsep}{1pt}
\begin{tabular*}{\textwidth}{@{\extracolsep{\fill}}l*{12}{c}@{}}
\toprule
 & \multicolumn{6}{c}{VG-Rel L/R, $n{=}3918$} & \multicolumn{6}{c}{What'sUp-A, $n{=}408$} \\
\cmidrule(lr){2-7}\cmidrule(lr){8-13}
\multirow{2}{*}{$\varepsilon$} & \multicolumn{3}{c}{\dirpix{}} & \multicolumn{3}{c}{\dirdense{}} & \multicolumn{3}{c}{\dirpix{}} & \multicolumn{3}{c}{\dirdense{}} \\
\cmidrule(lr){2-4}\cmidrule(lr){5-7}\cmidrule(lr){8-10}\cmidrule(lr){11-13}
 & acc. & n.acc. & gain & acc. & n.acc. & gain & acc. & n.acc. & gain & acc. & n.acc. & gain \\
\midrule
$0.02$ & 90.35 & 49.26 & $+41.1$ & 88.72 & 50.66 & $+38.1$ & 29.41 & 25.25 & $+4.2$ & 60.78 & 25.00 & $+35.8$ \\
$0.05$ & 90.45 & 49.03 & $+41.4$ & 90.22 & 49.44 & $+40.8$ & 31.62 & 25.74 & $+5.9$ & 41.18 & 25.25 & $+15.9$ \\
$0.1$ & 90.40 & 48.90 & $+41.5$ & 90.25 & 48.80 & $+41.5$ & 31.86 & 24.75 & $+7.1$ & 33.82 & 25.00 & $+8.8$ \\
$0.2$ & 90.40 & 48.88 & $+41.5$ & 90.30 & 48.26 & $+42.0$ & 32.35 & 24.75 & $+7.6$ & 32.11 & 25.00 & $+7.1$ \\
\bottomrule
\end{tabular*}
\end{table}

\subsection{Computational Cost}
\label{app:cost}
Tab.~\ref{tab:cost} compares the computational cost of ADR and the evaluated readouts.

\begin{table}[t]
\caption{Computational cost per item, averaged over the same 50 items on one NVIDIA H200 after one warm-up call. All ratios are relative to the deployed CLIP.}
\label{tab:cost}
\centering\fontsize{7}{9}\selectfont\setlength{\tabcolsep}{2.5pt}
\begin{tabular*}{\columnwidth}{@{\extracolsep{\fill}}lcccccccc@{}}
\toprule
\multirow{2}{*}{Readout} & \multirow{2}{*}{param.} & param.\ vs.\ & \multirow{2}{*}{FLOPs} & FLOPs vs.\ & \multirow{2}{*}{mem.\ (GiB)} & mem.\ vs.\ & \multirow{2}{*}{time (ms)} & time vs.\ \\
 & & CLIP & & CLIP & & CLIP & & CLIP \\
\midrule
deployed / CLIP & 0.15 B & 1.0$\times$ & $3.45{\times}10^{10}$ & 1.0$\times$ & 0.60 & 1.0$\times$ & 9.28 & 1.0$\times$ \\
ADR / CLIP & 0.15 B & 1.0$\times$ & $1.15{\times}10^{11}$ & 3.3$\times$ & 0.66 & 1.1$\times$ & 20.78 & 2.2$\times$ \\
half-crop / CLIP & 0.15 B & 1.0$\times$ & $1.38{\times}10^{11}$ & 4.0$\times$ & 0.60 & 1.0$\times$ & 43.83 & 4.7$\times$ \\
\midrule
deployed / SigLIP2 & 1.14 B & 7.6$\times$ & $7.39{\times}10^{11}$ & 21.4$\times$ & 4.35 & 7.3$\times$ & 35.97 & 3.9$\times$ \\
ADR / SigLIP2 & 1.14 B & 7.6$\times$ & $1.05{\times}10^{12}$ & 30.4$\times$ & 4.40 & 7.3$\times$ & 52.98 & 5.7$\times$ \\
half-crop / SigLIP2 & 1.14 B & 7.6$\times$ & $2.96{\times}10^{12}$ & 85.8$\times$ & 4.35 & 7.3$\times$ & 140.44 & 15.1$\times$ \\
\midrule
BLIP-ITM base & 0.22 B & 1.5$\times$ & $2.60{\times}10^{11}$ & 7.5$\times$ & 0.96 & 1.6$\times$ & 20.29 & 2.2$\times$ \\
LLaVA-1.5-7B likelihood & 7.06 B & 47.1$\times$ & $1.73{\times}10^{13}$ & 501.4$\times$ & 13.55 & 22.6$\times$ & 67.83 & 7.3$\times$ \\
LLaVA-1.5-7B chat & 7.06 B & 47.1$\times$ & $1.78{\times}10^{13}$ & 515.9$\times$ & 13.57 & 22.6$\times$ & 90.69 & 9.8$\times$ \\
Qwen3.6-27B likelihood & 27.36 B & 182.4$\times$ & $1.71{\times}10^{13}$ & 495.7$\times$ & 51.30 & 85.5$\times$ & 505.54 & 54.5$\times$ \\
Qwen3.6-27B chat & 27.36 B & 182.4$\times$ & $1.90{\times}10^{13}$ & 550.7$\times$ & 51.25 & 85.4$\times$ & 611.11 & 65.9$\times$ \\
\bottomrule
\end{tabular*}
\end{table}

\section{ADR Implementation}
\label{app:implementation}

\algrenewcommand{\algorithmiccomment}[1]{\hfill{\color{gray}\footnotesize\texttt{//} #1}}
\begin{algorithm}[t]
\caption{Antisymmetric Displacement Readout (ADR)}
\label{alg:channel}
\begin{algorithmic}[1]
  \Statex \textbf{Input} similarity matrix $\bS$, patch centers $\{\bp_i\}$, temperature $\varepsilon$, relations $\{(s_r,o_r,\be_r)\}$
  \Statex \textbf{Output} score $D$, alignment weights $\tilde{\pi}$ and centroids $\{\bc_j\}$
  \State $\pi_0\gets\mathrm{Sinkhorn}(\exp(\bS/\varepsilon))$
  \Comment{Eq.~\ref{eq:ot}}
  \State $\tilde{\pi}\gets[\pi_0-{1} / {(Nm)}]_+$
  \Comment{Retain excess mass}
  \State $\bc_j\gets
  {(\sum_i\tilde{\pi}_{ij}\bp_i)} /
  {\max(\sum_i\tilde{\pi}_{ij},10^{-30})}$
  \quad for each noun $j$
  \Comment{Calculate centroids}
  \State $D\gets\sum_r\be_r^\top(\bc_{s_r}-\bc_{o_r})$
  \Comment{Prop.~\ref{prop:centroid}}
\end{algorithmic}
\end{algorithm}

\subsection{Patch Feature Extraction}
\label{app:poolingrule}
We extract patch features using each encoder's existing weights. For CLS models such as CLIP, we apply the final attention block's input normalization, value projection and output projection to each token. This bypasses attention mixing, residual connections and the final block's feed-forward network. The patch outputs then pass through the encoder's final normalization and image projection.

For MAP models such as SigLIP, we keep all Transformer blocks and apply the pooling head separately to each final patch token. With only one token, the attention weight is $1$. For a token $\mathbf{y}_i$ after trunk normalization, the head gives $h(\mathbf{y}_i)=\mathrm{fc\_norm}\!\left(\mathrm{proj}(v(\mathbf{y}_i))+\mathrm{mlp}(\mathrm{norm}(\mathrm{proj}(v(\mathbf{y}_i))))\right)$. We retain the pooling head's MLP and residual connection.

Both paths produce features that we normalize to unit length to obtain $\bx_i$ for the similarity matrix $\bS$. As an implementation check, we compare the reconstructed full-token forward path with the model's original pooled embedding.

\subsection{Column Weights}
\label{app:localizers}
Prop.~\ref{prop:centroid} allows ADR to use any nonnegative alignment weights with positive column masses. Tab.~\ref{tab:weights} lists the two choices used in the main comparisons. They differ in how they localize words but use the same signed displacement score. In the OT coupling $\pi_0$, each patch distributes a fixed mass of $1/N$ across the words, so words compete for a shared patch budget. The default weights $\tilde{\pi}$ retain only the mass above the uniform baseline and no longer preserve the fixed row and column masses. Dense weights instead localize each word independently by averaging patch positions.

\begin{table}[h]
\caption{The column weights of ADR. Every weight feeds the same relation axis, sign rule and zero threshold and differs only in how patch mass is assigned to words. Their accuracies are reported in Tab.~\ref{tab:allreadouts}.}
\label{tab:weights}
\centering\fontsize{7}{9}\selectfont\setlength{\tabcolsep}{4pt}
\begin{tabular*}{\columnwidth}{@{\extracolsep{\fill}}lp{\dimexpr\columnwidth-64pt\relax}@{}}
\toprule
\multicolumn{2}{@{}l}{\dirpix{}, the default} \\
definition & \raggedright the excess $\tilde{\pi}=[\pi_0-\frac{1}{Nm}]_+$ of the Sinkhorn coupling\tabularnewline
what it buys & \raggedright The underlying coupling $\pi_0$ constrains both marginals, so words share a patch budget. Subtracting the uniform baseline removes its contribution to the centroid. The retained weights $\tilde{\pi}$ no longer preserve either marginal. Their column masses can be inspected alongside the localization maps, but do not by themselves establish localization quality.\tabularnewline
what it costs & \raggedright Sinkhorn scaling requires iteration. The shared patch budget in $\pi_0$ couples word localizations and can limit how well objects of different sizes are represented. A noun can lose all its excess mass and then has no centroid (App.~\ref{app:excess}).\tabularnewline
\midrule
\multicolumn{2}{@{}l}{\dirdense{}} \\
definition & \raggedright the softmax of $\bS/\varepsilon$ over patches, computed independently for each word\tabularnewline
what it buys & \raggedright Closed form with no iteration. Each word is localized on its own, so a large object cannot drain mass from a small one. The column mass is fixed, so every centroid is defined.\tabularnewline
what it costs & \raggedright There is no shared patch budget, so several words can concentrate their weights on the same patch. Each column has fixed total mass, so this total does not distinguish word localizations. The weights still define patch-word alignments for the centroid score.\tabularnewline
\bottomrule
\end{tabular*}
\end{table}

\subsection{Excess Coupling and the Degenerate Column}
\label{app:excess}
The default variant computes centroids from the excess coupling $\tilde{\pi}=[\pi_0-\tfrac{1}{Nm}]_+$. This is a fixed entrywise function of $\pi_0$, so nothing is re-optimized and Prop.~\ref{prop:centroid} applies wherever its column masses are positive. A column of $\tilde{\pi}$ is empty only if its column of $\pi_0$ is exactly uniform. The implementation floors the column mass at $10^{-30}$ and counts these cases. None were observed in published runs.

\subsection{Sinkhorn Settings}
\label{app:sinkhorn}
The OT coupling is computed by Sinkhorn scaling~\citep{sinkhorn1967concerning} of $\exp((\bS-\max \bS)/\varepsilon)$ with uniform marginals. Scaling runs at most $200$ iterations and stops when both scaling vectors change by less than $10^{-9}$ in the max norm. One iteration uses two matrix-vector products on the $N\times m$ table, about $4Nm\approx1.2\times10^{4}$ flops for $N=588$, $m=5$. At this illustrative patch count, the cap costs at most $2.4\times10^{6}$ flops, below $10^{-4}$ of the several $10^{10}$ flops of a ViT-B/16 forward pass. Patch counts vary with the input aspect ratio and resolution.

\subsection{Depth Relations}
\label{app:depth}
For front/behind relations, ADR uses the image vertical axis: in front of is $+y$ (lower in the image), and behind is $-y$ (higher in the image). This axis was chosen after exploratory VG-Relation analysis and fixed before evaluation on disjoint VG-Relation images and What'sUp-B. It uses image position as a depth cue rather than estimating three-dimensional depth. The positive-displacement interpretation in Sec.~\ref{sec:method} therefore applies to this proxy for front/behind: correct object localization alone does not guarantee the correct three-dimensional relation. The absence of training refers to encoder weights and learned readout parameters; this axis was selected manually from the exploratory analysis.

\section{Evaluation and Reproducibility}
\label{app:protocol}

\subsection{Data Processing and Released Assets}
\label{app:data}
Benchmark preparation has four steps. Tab.~\ref{tab:data-overview} summarizes the subsets used in the main-text results, and Tab.~\ref{tab:data} gives sampling and coverage details for the spatial sets. Sampling. The main VG-Relation comparisons use a fixed uniform sample of 6000 items. No other benchmark is subsampled. Of ARO we use VG-Relation only, because its attribution and order subsets vary attribute binding and word order rather than a directional relation. Subsets. The primary subset is $\mathrm{rel}\in\{\text{left},\text{right}\}$ on VG-Relation, selected by the benchmark's relation field with no coverage filter, so each item carries a caption and its swap. VSR supplies its own subset labels and reference-frame annotations. The prior analysis also includes $408$ VG-Relation verb items and $949$ VALSE actant-swap pairs. The VALSE pairs exchange the agent and patient of a verb. Parsing. Every item receives a derangement partner from the same benchmark by the cyclic shift of App.~\ref{app:nulls}, $K{=}997$ on VG-Relation and $K{=}101$ on the $k$-way cells. We release the item lists with their subset labels, the derangement partner indices, the relation phrases and parser, the scoring and oracle-threshold scripts, the per-item audit records, and the preregistration files with their hash manifests. Images are loaded from the benchmarks' official sources and are not redistributed.

VG-Relation images are the relation crops of ARO's official loader, and every other set enters as the whole image. Dual-encoder readouts receive the original caption text; MLLM task instructions are specified in App.~\ref{app:mllmforms}.

\begin{table}[t]
\caption{Evaluation subsets from ARO~\citep{yuksekgonul2023when}, What'sUp~\citep{kamath2023whatsup}, VSR~\citep{liu2023visual} and VALSE~\citep{parcalabescu2022valse}. Classes gives the number of possible answers, and Items counts evaluated examples. Data processing details are given in App.~\ref{app:data}.}
\label{tab:sets}
\label{tab:data-overview}
\centering
\fontsize{8}{9.5}
\selectfont
\begin{tabular*}{\columnwidth}{@{\extracolsep{\fill}}lllclc@{}}
\toprule
Subset & Dataset & Relations & Classes & Question type & Items \\
\midrule
\multicolumn{6}{@{}l}{\emph{(a) Direction-balanced subsets (Tabs.~\ref{tab:main} and~\ref{tab:alternatives})}} \\
VG-Relation left/right & ARO & Horizontal & 2 & Multiple choice & 3918 \\
COCO-spatial & What'sUp & Horizontal, vertical & 2 & Multiple choice & 404 \\
What'sUp-A & What'sUp & Horizontal, vertical & 4 & Multiple choice & 408 \\
What'sUp-B & What'sUp & Horizontal, depth & 2 & Multiple choice & 354 \\
VSR horizontal & VSR & Horizontal & 2 & True/false & 1201 \\
\midrule
\multicolumn{6}{@{}l}{\emph{(b) Prior-biased subsets (Tab.~\ref{tab:prior-all})}} \\
VG-Relation vertical & ARO & Vertical & 2 & Multiple choice & 800 \\
VG-Relation verbs & ARO & Verb relations & 2 & Multiple choice & 408 \\
VALSE actant swap & VALSE & Verb argument roles & 2 & Multiple choice & 949 \\
\bottomrule
\end{tabular*}
\end{table}

\begin{table}[t]
\caption{Evaluation sets. Each set is a deterministic selection from a public benchmark. The VG-Relation results use the 6000-item sample. Sources are ARO VG-Relation \citep{yuksekgonul2023when}, the What'sUp release \citep{kamath2023whatsup}, which also supplies COCO-spatial, and VSR \citep{liu2023visual}. Covered counts the items whose caption the fixed parsing rule covers, and VSR carries the benchmark's own reference-frame labels. COCO-spatial's $2247$ single-object captions lie outside the two-argument readout.}
\label{tab:data}
\centering\fontsize{7}{9}\selectfont\setlength{\tabcolsep}{3pt}
\begin{tabular*}{\columnwidth}{@{\extracolsep{\fill}}llll@{}}
\toprule
evaluation set & source & selection & items \\
\midrule
VG-Relation left/right, 6000-item cell & ARO & relation field, 6000-item pick & 3918 \\
VG-Relation vertical, 6000-item cell & ARO & relation field, 6000-item pick & 800 \\
COCO-spatial two-object & What'sUp & all two-object pairs & 440 (404 covered) \\
What'sUp-A & What'sUp & full set, 4-way & 412 (408 covered) \\
What'sUp-B & What'sUp & front/behind and left/right pairs & 354 \\
VSR horizontal & VSR & horizontal subset & 1224 (1201 covered) \\
\bottomrule
\end{tabular*}
\end{table}

\subsection{Models}
\label{app:models}
Every model here is frozen. The dual encoders of block (a) are read twice on the same features, by the deployed
pooled score the benchmark reports and by ADR, as is the ITC branch of BLIP base. The matching heads of block (b)
and the MLLMs of block (c) lie outside the theorems of Sec.~\ref{sec:impossibility} and are scored in their own
native forms.

\begin{table}[t]
\caption{Every model evaluated in this paper. $\star$ marks the nine score families of
Table~\ref{tab:main}, where BLIP enters through the ITC branch of its base checkpoint. Loader
settings are in the released code.}
\label{tab:models}
\centering
\small
\begin{tabular}{@{}ll@{}}
\toprule
model & note \\
\midrule
\multicolumn{2}{@{}l}{(a) frozen dual encoders, deployed pooled score and ADR (Tabs.~\ref{tab:main} and~\ref{tab:allreadouts})} \\
CLIP ViT-B/32 & \\
CLIP ViT-B/16$^{\star}$ & \\
CLIP ViT-L/14 & \\
LAION-2B ViT-B/16$^{\star}$ & App.~\ref{app:grid} \\
DFN5B ViT-H/14$^{\star}$ & \\
MetaCLIP ViT-H/14$^{\star}$ & \\
SigLIP-B/16 & pooling-native \\
SigLIP-So400m & pooling-native \\
SigLIP2-So400m$^{\star}$ & pooling-native \\
NegCLIP$^{\star}$ & fine-tuned from ViT-B/32 \\
CLoVe$^{\star}$ & fine-tuned from ViT-B/32, App.~\ref{app:grid} \\
CE-CLIP$^{\star}$ & fine-tuned from ViT-B/32, App.~\ref{app:grid} \\
\midrule
\multicolumn{2}{@{}l}{(b) cross-encoder matching heads, read by ITM (Table~\ref{tab:alternatives})} \\
BLIP-ITM base$^{\star}$ & \\
BLIP-ITM large & \\
BLIP-2 ViT-g & \\
FLAVA-full & \\
\midrule
\multicolumn{2}{@{}l}{(c) MLLMs (Tabs.~\ref{tab:alternatives} and~\ref{tab:allreadouts}; settings in App.~\ref{app:mllmforms})} \\
LLaVA-1.5-7B & \citep{liu2024improved} \\
Qwen3.6-27B & \citep{qwen36_27b} \\
\bottomrule
\end{tabular}
\end{table}

\paragraph{Matching-head settings.}
We measured BLIP-ITM base and large (COCO fine-tuned), BLIP-2 ITM ViT-g and FLAVA-full with its pretrain-only ITM head \citep{li2022blip,li2023blip,singh2022flava}. They lie outside the theorems because joint attention can read word order. ITM is the matched-class log-softmax of the two-way match head, scored on the paper's items, forced choices and derangement nulls, with $K{=}997$ on VG-Relation and $K{=}101$ on the $k$-way cells, in fp32. ITM requires one joint forward pass per image--text pair and cannot be precomputed.

\subsection{Scoring}
\label{app:scoring}
Ties are margins within $10^{-9}$ and score $\tfrac12$, split evenly in $k$-way items. A pair is decided by the margin $D(C^+)-D(C^-)$, a $k$-way item by the largest $D$ with uncovered options at $D{=}0$, and a VSR true/false item by the sign rule, True iff $D>10^{-9}$. Baseline true/false readouts use a monotone threshold chosen on the evaluation labels, with both constant classifiers included, so their accuracy is at least $0.5$ and overstated in their favor. For these VSR baselines, the threshold is optimized separately under the original pairing and the derangement null. The unchanged decision rule in Sec.~\ref{sec:deflation} is this threshold-selection procedure, not a shared numerical threshold. Their grounded gain is the difference between the two separately optimized accuracies.

\paragraph{Statistics.}
\label{app:statistics}
Confidence intervals are 10k-resample item-level bootstraps within each subset (seed 0), paired over the shared items for paired differences.

\subsection{MLLM Question Forms}
\label{app:mllmforms}
The MLLM readouts are run plainly, with no system prompt, no worked example, no chain of thought, no retry and no self-consistency, greedy decoding with a four-token budget, and the optional thinking mode of the one model that has it turned off. In the likelihood form the readout is the caption log-probability under the fixed prefix \texttt{Write a one-sentence description of the image.}, summed over the caption's own tokens. Among $k$ candidates the largest log-probability wins under the same $10^{-9}$ tie rule as every other readout, and on VSR it is read through a single threshold chosen on the evaluation labels, the concession the deployed score receives there. In the chat form two candidates are asked with the question published for the VG-Relation cell, \texttt{\small Which of the following captions correctly describes the image? A: \{a\} B: \{b\} Answer with a single letter, A or B.}, a $k$-way item with $k$ lettered lines, and a true or false item with \texttt{Is the following statement true of the image? \{statement\} Answer with a single word, yes or no.} Every item is asked in both candidate orders and the two answers are averaged. The reply is read for the first standalone option letter or yes or no, and a reply with none scores $1/k$. The per-item replies of both orders ship with the release.

\subsection{Half-Crop Intervention on the Five Balanced Sets}
\label{app:halfcrop}
The image is cut into its left, right, top and bottom halves, each half is encoded once, and a caption is scored by comparing its subject and object words with the two halves on either side of its relation axis. On VG-Relation left/right the margin is the true caption's score minus the false caption's. On COCO-spatial and What'sUp-A every candidate is scored along its own axis and the highest score wins with ties split, on the $404$ and $408$ items where every caption parses. What'sUp-B uses the same $354$ items as ADR. Left/right relations use horizontal image halves, while front/behind relations follow the depth rule, with in front of as lower in the image and behind as higher (App.~\ref{app:depth}). On VSR the margin is the statement's score minus its score with subject and object exchanged, read by its sign on the covered $1201$ items. Every set is scored against its own derangement null (App.~\ref{app:nulls}).

\subsection{Nulls}
\label{app:nulls}
The derangement is a deterministic cyclic shift over the item list with same-source exclusion. No item receives its original source image or a duplicate of it.

\end{document}